\newtheorem{lemma}{Lemma}
\newtheorem{remark}{Remark}
\newtheorem{theorem}{Theorem}
\newtheorem{definition}{Definition}
\newtheorem{corollary}{Corollary}
\DeclareMathOperator*{\argmax}{arg\,max}
\newcommand*{\addFileDependency}[1]{
\typeout{(#1)}
%
%
\@addtofilelist{#1}
%
\IfFileExists{#1}{}{\typeout{No file #1.}}
}\makeatother
\newcommand*{\myexternaldocument}[1]{%
\externaldocument[app:]{#1}%
\addFileDependency{#1.tex}%
\addFileDependency{#1.aux}%
}
\title{Differentiable Good Arm Identification}
\author {
    Yun-Da Tsai, 
    Tzu-Hsien Tsai, 
    Shou-De Lin 
}
\begin{document}

\maketitle

\begin{abstract}

This paper focuses on a variant of the stochastic multi-armed bandit problem known as good arm identification (GAI). GAI is a pure-exploration bandit problem that aims to identify and output as many good arms as possible using the fewest number of samples. A good arm is defined as an arm whose expected reward is greater than a given threshold.
We present our study in the context of a structured bandit setting and introduce DGAI, a novel, differentiable good arm identification algorithm. By leveraging a data-driven approach, DGAI significantly enhances the state-of-the-art HDoC algorithm empirically.  Additionally, we demonstrate that DGAI can improve the cumulative reward maximization problem when a threshold is provided as prior knowledge for the arm set.
Extensive experiments have been conducted to validate our algorithm's performance. The results demonstrate that our algorithm significantly outperforms the competitors in both synthetic and real-world datasets for both the GAI and MAB tasks.

\end{abstract}

\section{Introduction}
Bandit problems represent a category of sequential decision-making problems where stochastic rewards are observed as feedback. Within this broad context, the classic multi-armed bandit (MAB) problem aims to maximize the expected cumulative reward over a series of trials by navigating the exploration-exploitation dilemma.
Another essential category within the bandit problem is the best arm identification (BAI), a pure-exploration problem in which the agent aims to identify the best arm with minimum sample complexity~\cite{kalyanakrishnan2012pac,audibert2010best,kano2019good}. This paper's specific focus is on a derivative of the pure-exploration problem called the good arm identification (GAI) problem~\cite{kano2019good}.

In the GAI problem, a "good arm" is an arm whose expected reward meets or exceeds a given threshold. The agent must identify such an arm within a specified error probability during repeated trials, stopping only when it believes that no good arms remain. The goal of GAI is to identify as many arms as possible with as few samples as possible. Key differences between GAI and BAI include:
1. GAI identifies arms through absolute comparisons (i.e., above a threshold) rather than relative (best) comparisons, requiring distinct optimization strategies.
2. Unlike BAI, which examines the entire arm set, GAI can be solved by an anytime algorithm. This flexibility allows GAI to provide reasonable solutions even if interrupted, making it suitable for time-sensitive applications like online advertising and high-speed trading~\cite{kano2019good}.
3. GAI handles the exploration-exploitation dilemma with confidence. Exploitation entails pulling the arm most likely to be good, while exploration means pulling other arms to build confidence in identifying them as either good or bad.
The overall sample complexity is bounded by union bounds $\Delta_i = |\mu_i - \epsilon|$ (i.e., the gap between thresholds for identification) and $\Delta_{i,j} = \mu_i - \mu_j$ (i.e., the gap between arms for sampling) where $\Delta = \min\{\min_{i\in[K]} \Delta_i, \min_{\lambda\in[K-1]} \Delta_{\lambda,\lambda+1}/2\}$ (see Table~\ref{tab:notation} for notation explanations).

Common GAI algorithms like HDoC, LUCB-G, and APT-G~\cite{kano2019good} encompass a sampling strategy and an identification criterion (see Algorithm~\ref{alg:hdoc}). The former determines which arm to draw, while the latter decides whether to accept an arm as good. Once an arm is deemed good, it is removed from the pool, as there is no need for further sampling.
These strategies formulate confidence bounds through concentration inequalities based on assumptions about the reward distribution like sub-Gaussianity. Though essential for theoretical analysis, these bounds may not always align with practical applications.
Constructed confidence bounds tend to be conservative in real-world situations~\cite{osband2015bootstrapped,kveton2019garbage}, as they're based on assumed reward distributions, leading to broad, non-adaptive confidence bounds and sub-optimal performance. Our research highlights the superiority of adaptive, data-driven techniques.

This work introduces the GAI problem within a structured bandit setting at the intersection of multi-armed bandits and learning halfspaces.
We aim for a data-driven approach to learn the confidence bound, ensuring adaptability to the actual problem structure.
Furthermore, we propose that confidence bounds for both sampling and identification in GAI should be learned separately in a data-driven manner to reflect different distributions of gaps between arms and thresholds (the union bounded sample complexity).
We introduce an algorithm called Differentiable GAI (or DGAI) that employs a differentiable UCB index to accomplish these objectives.
This index offers more information than the classical UCB index and can be integrated with a smoothed indicator function to learn the confidence bound for both the sampling strategy and identification criteria.
By using the newly designed differentiable algorithm and learning objective function, the adaptive confidence bound significantly enhances empirical performance over existing GAI algorithms in both online and offline settings, where the latter can learn GAI hyperparameters over multiple training trajectories, revealing the converged optimal confidence bound.

Next, we demonstrate that DGAI can be extended to enhance cumulative reward maximization problems with a given threshold as prior knowledge. Generally, the criteria for eliminating sub-optimal arms in MAB and bad arms in GAI are the same when the conventional UCB algorithm is chosen for sampling, as shown in Sec.~\ref{sec:mab_with_thres}. By employing the DGAI algorithm, we facilitate dynamic and adaptive criteria to discard sub-optimal arms in a data-driven manner, with the threshold as prior information guiding the learning and adjustment of the confidence bound. This reformulation enables us to achieve both the maximization of cumulative reward and the elimination of sub-optimal arms, contributing to further empirical performance improvements.


In summary, this paper makes the following contributions:
\begin{enumerate}

    \item We introduce a novel structured bandit problem, linear \& non-linear GAI, closing a gap in the literature on structured bandits.
    \item We propose DGAI, a novel differentiable algorithm designed to learn adaptive confidence bounds based on actual reward structures rather than assumptions.
    \item We provide correctness guarantees establishing that linear DGAI is $\delta$-PAC.
    \item We show that DGAI outperforms state-of-the-art baselines on both synthetic and real-world datasets for GAI problems.
    \item We demonstrate that incorporating DGAI can improve cumulative reward maximization problems.
\end{enumerate}

\section{Related works}

\subsection{Good arm identification}
The work by Kano et al.~\cite{kano2019good} was the first to formulate the Good Arm Identification (GAI) problem as a pure-exploration problem in the fixed confidence setting. In the fixed confidence setting, an acceptance error rate (confidence) is fixed, and the goal is to minimize the number of arm pulls required to identify good arms (sample complexity).
This work addresses a new kind of dilemma: the exploration-exploitation dilemma of confidence. Here, exploration involves the agent pulling arms other than the currently best one to increase the confidence of whether that arm is good or bad. On the other hand, exploitation entails the agent pulling the currently best arm to increase confidence in its goodness.
To tackle this problem, the authors propose three algorithms: a hybrid algorithm for the Dilemma of Confidence (HDoC), the Lower and Upper Confidence Bounds algorithm for GAI (LUCB-G), which builds upon the LUCB algorithm for best arm identification~\cite{kalyanakrishnan2012pac}, and the Anytime Parameter-free Thresholding algorithm for GAI (APT-G), based on the APT algorithm for the thresholding bandit problem~\cite{locatelli2016optimal}. The lower bound on the sample complexity for GAI is proven to be $\Omega(\lambda \log \frac{1}{\delta})$, and HDoC can identify $\lambda$ good arms within $O(\lambda \log \frac{1}{\delta} + (K - \lambda) \log \log \frac{1}{\delta})$ samples.

\subsection{Differentiable bandit algorithm}
SoftUCB~\cite{yang2020differentiable} solves the policy-gradient optimization of bandit policies via a differentiable bandit algorithm.
The authors present a differentiable UCB-typed linear bandit algorithm that combines EXP3~\cite{auer1995gambling} and Phased Elimination~\cite{lattimore2020bandit} to allow the confidence bound to be learned purely in a data-driven fashion and avoid the need for reliance on concentration inequalities and assumptions about the reward distribution's form.
By incorporating a differentiable UCB index, the learned confidence bound becomes adaptable to the problem's actual reward structure.
Additionally, the authors proved that the differentiable UCB index has a regret bound that scales with $d$ and $T$ as $\mathcal{O}(\beta\sqrt{dT})$, compared to classical UCB-typed algorithms, where $d$ is the dimension of the input to the linear bandit model, and $T$ is the total number of sampling rounds.

\section{Preliminary}
In this section, we first formulate the GAI problem in the fixed confidence setting and show lower bound on sample complexity for GAI.
Next we will introduce the basic framework for GAI algorithm that follows the HDoC algorithm.
We give the notations in Table~\ref{tab:notation}.

\begin{table}[h]
    \centering
    \begin{tabular}{ |p{1cm}||p{6cm}|}
    \hline
        $K$ & Number of arms.\\
        $m$ & number of good arms (unknown).\\
        $A$ & set of arms, where $|A|=K$.\\
        $\xi$ & Threshold determining whether a arm is good or bad.\\
        $\delta$ & acceptance error rate \\
        $\mu_i$ & The true mean of $i^{th}$ arm.\\
        $\hat{\mu}_{i, t}$ & The empirical mean of $i^{th}$ arm at time $t$.\\
        $\tau_{\lambda}$ & The number of rounds to identify whether $\lambda^{th}$ arm is good or arm.\\
        $\tau_{stop}$ & Round that agent confirms no good arm left.\\
        $N_i(t)$ & The number of samples of arm $i$ by the end of sampling round $t$.\\
        $U_{i,t}$ & confidence bound at time $t$.\\
        $\overline{\mu}_{i,t}$ & upper confidence bound.\\
        $\underline{\mu}_{i,t}$ & lower confidence bound.\\
        $\|x\|_M$ & $\sqrt{x^TMx}$, $x\in \mathbb{R}^d$.\\
        $\Delta_i$ & $ |\mu_i - \xi|$ gap between thresholds for identification.\\
        $\Delta_{i,j}$ & $\mu_i - \mu_j$ gap between arms for sampling.\\
    \hline
    \end{tabular}
\caption{Notations}
\label{tab:notation}
\end{table}

\subsection{Good arm identification}
Let $K$ be the number of arms, $\xi$ be the threshold, and $\delta$ be the acceptance error rate. The reward of arm $i$, where $i \in [1, \dots, K]$, follows the mean $\mu_i$, which is initially unknown. We define "good" arms as those arms whose means are larger than the given threshold $\xi$. Without loss of generality, we assume the following order for the means:
\begin{equation}
    \mu_1 \geq \mu_2 \geq \dots \geq \mu_{m} \geq \xi \geq \mu_{m + 1} \geq \dots \geq \mu_K   
\end{equation}
Here, $m$ represents the total number of good arms, which is also unknown to the agent.

In each round $t$, the agent selects arm $a(t)$ to pull and receives a reward. Based on the previous rewards this arm has received, the agent classifies it as either a good or bad arm. The agent continues this process until all arms are identified as either good or bad arms, and this stopping time is denoted as $\tau_{stop}$. The agent's outputs are denoted as $\hat{a}_1, \hat{a}_2, \dots$, representing the identification of good arms at rounds $\tau_1, \tau_2, \dots$, respectively.

\subsection{Sample Complexity for GAI}
The lower bound on the sample complexity for GAI is given in~\cite{kano2019good} in terms of top-$\lambda$ expectations $\{\mu_i\}_{i=1}^{\lambda}$ and is tight up to the logarithmic factor $O(\log \frac{1}{\delta})$.
\begin{definition}[$(\lambda, \delta)$-PAC]\label{lambdaPAC}
An algorithm satisfying the following conditions is called ($\lambda$, $\delta$)-PAC:
if there are at least $\lambda$ good arms then
$\mathbb{P}[\{\hat{m} < \lambda\}\,\cup\,\bigcup_{i\in  \{\hat{a}_1, \hat{a}_2, \dots, \hat{a}_\lambda \}  }\{\mu_i < \xi\}]\le \delta$
and if there are less than $\lambda$ good arms then
$\mathbb{P}[ \hat{m}\ge \lambda] \le \delta$.
\end{definition}
{\allowdisplaybreaks[0]
\begin{theorem}
    Under any $(\lambda, \delta)$-PAC algorithm, if there are $m \ge \lambda$ good arms, then
    \begin{align}
        \mathbb{E}[\tau_{\lambda}]
         &\ge
         \left(
             \sum_{i=1}^{\lambda}\frac{1}{d(\mu_i,\xi)}\log\frac{1}{2\delta}
         \right)
         -
         \frac{m}{d(\mu_{\lambda},\xi)},
    \end{align}
    where $d(x,y) = x\log(x/y)+(1-x)\log((1-x)/(1-y))$ is the binary relative entropy, with convention that $d(0,0)=d(1,1)=0$.
\end{theorem}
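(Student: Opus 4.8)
The plan is to prove this information-theoretic lower bound by a change-of-measure (transportation) argument, the standard route for pure-exploration lower bounds. The engine is the following sample-path data-processing inequality: for any two Bernoulli instances $\nu,\nu'$ differing only in the mean of a single arm, any stopping time $\tau$, and any event $\mathcal{E}$ in the history up to $\tau$,
\begin{equation}
\sum_{i=1}^{K}\mathbb{E}_{\nu}[N_i(\tau)]\,d(\mu_i,\mu_i')\ \ge\ d\big(\mathbb{P}_{\nu}(\mathcal{E}),\,\mathbb{P}_{\nu'}(\mathcal{E})\big),
\end{equation}
where $d$ is the binary relative entropy on both sides. I would first establish this via Wald's identity for the log-likelihood ratio stopped at $\tau_\lambda$, since it is what turns an expected pull count into an information quantity comparable to $\log\frac{1}{\delta}$.

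Next, for each good arm $i\in[\lambda]$ I would build an alternative instance $\nu^{(i)}$ that lowers only $\mu_i$ to $\xi-\eta$, turning arm $i$ bad while leaving all other arms fixed; sending $\eta\to0^+$ replaces $d(\mu_i,\mu_i')$ by $d(\mu_i,\xi)$ by continuity. Choosing an event $\mathcal{E}_i$ that the $(\lambda,\delta)$-PAC guarantee forces to have probability close to $1-\delta$ under $\nu$ (arm $i$ is genuinely good and should be certified) and at most $\delta$ under $\nu^{(i)}$ (arm $i$ is bad and must not be certified), the transportation inequality combined with the elementary bound $d(1-\delta,\delta)\ge\log\frac{1}{2\delta}$ gives the per-arm estimate $\mathbb{E}_{\nu}[N_i(\tau_\lambda)]\ge\frac{1}{d(\mu_i,\xi)}\log\frac{1}{2\delta}$. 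Summing over $i\in[\lambda]$ and using $\tau_\lambda=\sum_{j=1}^{K}N_j(\tau_\lambda)\ge\sum_{i=1}^{\lambda}N_i(\tau_\lambda)$ then delivers the leading term $\sum_{i=1}^{\lambda}\frac{1}{d(\mu_i,\xi)}\log\frac{1}{2\delta}$.

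The subtractive term $\frac{m}{d(\mu_\lambda,\xi)}$ marks where the real difficulty sits, and I expect it to be the main obstacle. The clean per-arm reduction above implicitly assumes that the arms the algorithm certifies, $\hat a_1,\dots,\hat a_\lambda$, coincide with the true top-$\lambda$ arms, which need not hold: with $m\ge\lambda$ good arms present, the algorithm may legitimately certify a different size-$\lambda$ subset, so the events $\mathcal{E}_i$ cannot all be charged to individual arms simultaneously. I would resolve this by averaging or pigeonholing over which of the $m$ good arms are identified, which is exactly the step that leaks an additive slack; since the weakest per-arm rate among the good arms is governed by $d(\mu_\lambda,\xi)$ (the smallest of the relevant divergences under the assumed ordering $\mu_1\ge\dots\ge\mu_m\ge\xi$), I anticipate this slack to amount to precisely $-m/d(\mu_\lambda,\xi)$. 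Carefully quantifying this correction, and checking the measurability and integrability conditions needed to apply the change of measure at the random time $\tau_\lambda$, are the parts that demand the most care.
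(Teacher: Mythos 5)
A preliminary remark: the paper you were given never proves this theorem itself. The statement is imported verbatim from Kano et al.~\cite{kano2019good}, and the paper's appendix only proves the $\delta$-PAC guarantee of DGAI, so your attempt has to be judged against the original change-of-measure proof in that reference. Your skeleton matches it: the transportation (data-processing) inequality for a stopped bandit, the alternative instances $\nu^{(i)}$ that push a single good arm just below $\xi$, and the PAC-forced bound $\mathbb{P}_{\nu^{(i)}}(\mathcal{E}_i)\le\delta$ are exactly the right ingredients.

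The genuine gap is the step you defer. Your intermediate claim, $\mathbb{E}_\nu[N_i(\tau_\lambda)]\ge \log\frac{1}{2\delta}\big/d(\mu_i,\xi)$ for every $i\in[\lambda]$, is false whenever $m>\lambda$: a $(\lambda,\delta)$-PAC algorithm may never certify arm $1$ at all (certifying arms $2,\dots,\lambda+1$ instead), so $\mathbb{P}_\nu(\mathcal{E}_1)$ can be $0$ and the transportation inequality yields nothing for that arm. Consequently the theorem cannot be obtained as ``valid leading term plus an additive slack''; both terms must be extracted jointly from an aggregate argument. Setting $p_i=\mathbb{P}_\nu\bigl[i\in\{\hat a_1,\dots,\hat a_\lambda\}\bigr]$, the PAC property gives only $\sum_{i=1}^m p_i\ge \lambda(1-\delta)$, and one must lower-bound
\begin{equation*}
\mathbb{E}[\tau_\lambda]\;\ge\;\sum_{i=1}^m \frac{d(p_i,\delta)}{d(\mu_i,\xi)}
\end{equation*}
by minimizing the right-hand side over all feasible $(p_1,\dots,p_m)\in[0,1]^m$. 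Using $d(p,\delta)\ge \max\{0,\ p\log\tfrac1\delta-\log 2\}$, each arm can absorb $\log 2/\log\tfrac1\delta$ of probability mass at zero cost; the minimizer then spends the remaining budget on the arms with the largest $d(\mu_i,\xi)$ (i.e.\ $i=1,2,\dots$ under the assumed ordering), each fully filled arm contributing exactly $\log\tfrac{1}{2\delta}\big/d(\mu_i,\xi)$, and the shortfall relative to $\sum_{i=1}^\lambda \log\tfrac{1}{2\delta}\big/d(\mu_i,\xi)$ is at most $\bigl(\lambda\delta\log\tfrac1\delta+(m-\lambda)\log 2\bigr)\big/d(\mu_\lambda,\xi)\le m/d(\mu_\lambda,\xi)$, using $\delta\log\tfrac1\delta\le 1/e$ and $\log 2<1$. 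This constrained minimization is precisely where both the constant $2$ inside the logarithm and the factor $m$ (rather than $\lambda$) originate, and it is the content you left as an ``anticipation.'' A further slip: your ``elementary bound'' $d(1-\delta,\delta)\ge\log\tfrac{1}{2\delta}$ is not true for all $\delta$ (at $\delta=1/4$ the left side is $\tfrac12\log 3<\log 2$), which is why Kaufmann et al.\ state such bounds with the constant $2.4$; in this theorem the $2$ arises from the budget argument above, not from that pointwise inequality.
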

}

\subsubsection{Hybrid algorithm for the Dilemma of Confidence (HDoC)}
HDoC's sampling strategy is based on the upper confidence bound (UCB) algorithm for the cumulative regret minimization~\cite{auer2002finite}, and its identification criterion (i.e., to output an arm as a good one) is based on the lower confidence bound (LCB)  identification~\cite{kalyanakrishnan2012pac}.
The upper bound on the sample complexity of HDoC is given in the following corollary based on Theorem 2 in~\cite{kano2019good}:

\begin{corollary}
    Let $\Delta$=$\min\{\min_{\lambda \in [K-1]} \Delta_{\lambda,\lambda+1}/2,\min_{i\in[K]}\Delta_i\}$.
    Then, for any $\lambda \le m$,
    \begin{align}
    &\mathbb{E}[\tau_{\lambda}]=\mathcal{O}\left(
    \frac{\lambda \log \frac{1}{\delta} +(K-\lambda)\log\log \frac{1}{\delta}+K\log \frac{K}{\Delta}}{\Delta^2}
    \right)\\
    &\mathbb{E}[\tau_{\mathrm{stop}}]=\mathcal{O}\left(
    \frac{K \log (1/\delta)+K\log (K/\Delta) }{\Delta^2}
    \right)
    \end{align}
\label{coro:GAI-upper-bound}
\end{corollary}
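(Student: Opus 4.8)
The plan is to obtain the stated bounds as a specialization of the per-arm sample-complexity analysis of HDoC (Theorem 2 of Kano et al.), collapsing the arm-dependent gaps $\Delta_i$ and $\Delta_{i,j}$ into the single worst-case gap $\Delta$. First I would fix the two confidence bounds used by the algorithm: the UCB sampling index $\overline{\mu}_{i,t} = \hat{\mu}_{i,t} + \sqrt{\log t / (2N_i(t))}$ and the LCB identification rule that accepts arm $i$ once $\hat{\mu}_{i,t} - \sqrt{\log(4K N_i(t)^2/\delta)/(2N_i(t))} > \xi$. A Hoeffding bound together with a union bound over arms and over the (polynomially many) values of $N_i(t)$ shows that, with probability at least $1-\delta$, every empirical mean stays inside its confidence interval; on this event no arm is misidentified, which simultaneously gives correctness and lets us bound the sample counts deterministically.

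The core of the argument is to bound the number of pulls of each arm up to time $\tau_{\lambda}$. Pulls fall into two regimes. An arm $i$ among the top $\lambda$ is pulled until its LCB clears the threshold, which by the identification rule happens after $N_i(\tau_{\lambda}) = \mathcal{O}(\log(1/\delta)/\Delta_i^2)$ samples; summing over the $\lambda$ target arms produces the $\lambda\log(1/\delta)$ contribution. Any other arm $j$ is pulled only when it attains the largest UCB index, so the standard UCB1 argument bounds its pull count by $N_j(\tau_{\lambda}) = \mathcal{O}(\log \tau_{\lambda} / \Delta^2)$, where $\Delta$ controls the relevant sampling gap $\Delta_{i,j}$.

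The main obstacle, and the step that generates the $\log\log(1/\delta)$ and $\log(K/\Delta)$ terms, is to turn the $\log\tau_{\lambda}$ factor above into something explicit. I would do this by a self-bounding (bootstrapping) argument: a crude first pass gives $\tau_{\lambda} = \mathcal{O}(K\log(1/\delta)/\Delta^2)$, and substituting this back yields $\log\tau_{\lambda} = \mathcal{O}(\log\log(1/\delta) + \log(K/\Delta))$. Hence each of the $K-\lambda$ non-target arms contributes only $\mathcal{O}((\log\log(1/\delta) + \log(K/\Delta))/\Delta^2)$ pulls rather than the full $\log(1/\delta)$; this separation between the identification confidence ($\log(1/\delta)$) and the sampling confidence ($\log\tau_{\lambda}$) is precisely the ``dilemma of confidence'' and is the delicate part of the analysis.

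Finally I would aggregate the pull counts. Summing the identification-limited contribution of the $\lambda$ target arms, the sampling-limited contribution of the remaining $K-\lambda$ arms, and the $K\log(K/\Delta)$ lower-order term coming from the union bound and initialization, then lower-bounding every $\Delta_i$ and $\Delta_{i,j}/2$ by $\Delta$, gives the first display. For $\tau_{\mathrm{stop}}$ the only change is that confirming no good arm remains forces every one of the $K$ arms to be resolved at identification-level confidence, so each contributes $\mathcal{O}(\log(1/\delta)/\Delta^2)$; this replaces the mixed $\lambda\log(1/\delta) + (K-\lambda)\log\log(1/\delta)$ count by $K\log(1/\delta)$ and yields the second display. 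Taking expectations over the $1-\delta$ good event (the complementary event contributes only lower-order terms) completes the bound.
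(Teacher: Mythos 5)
This corollary appears in the paper without any proof of its own---it is imported verbatim from Theorem~2 of Kano et al.\ (2019), as the paper itself notes---so the only meaningful comparison is against that source's argument, which your sketch reconstructs faithfully: correctness via Hoeffding plus a union bound, a per-arm pull-count decomposition into identification-limited pulls ($\mathcal{O}(\log(1/\delta)/\Delta_i^2)$) for the top-$\lambda$ arms and UCB-sampling-limited pulls ($\mathcal{O}(\log\tau_\lambda/\Delta^2)$) for the rest, the bootstrapping step that converts $\log\tau_\lambda$ into $\mathcal{O}(\log\log(1/\delta)+\log(K/\Delta))$, and the observation that $\tau_{\mathrm{stop}}$ forces all $K$ arms to identification-level confidence. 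Your approach is correct and essentially the same as the underlying proof; the one soft spot is the closing expectation step, where ``the complementary event contributes only lower-order terms'' should be made precise by summing exponentially decaying Hoeffding tails per arm (so each arm is resolved in finite expected time even off the good event), which is the standard device used in the original analysis.
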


Note that evaluating the error probability in HDoC relies on the union bound over all rounds $t\in \mathbb{N}$, and its use does not worsen the asymptotic analysis for $\delta\to 0$.
However, the empirical performance can be considerably improved by avoiding the union bound and using an identification criterion based on such a bound.
The complexity measure for the sampling strategy is based on the gap between arms $\Delta_{i,j}$, and the complexity measure for the identification criteria is based on the gap between the threshold $\Delta_i$.

\renewcommand{\algorithmicrequire}{\textbf{Input:}}

\begin{algorithm}[t]

\caption{HDoC algorithm. The base algorithm for solving GAI problem.}

\begin{algorithmic}[1]
\label{alg:hdoc}
\REQUIRE a  $\xi$, $\delta$, $K$, $\mathcal{A}$
\STATE Pull each arm once
\FOR{$t=1$ to $T$}
    \STATE Pull arm $a^*_t$ selected by sampling strategy
    \IF{$\underline{\mu}_{a^*_t} \geq \xi$}
        \STATE Identify $a^*_t$ as good
        \STATE remove $a^*_t$ from $\mathcal{A}$
    \ENDIF
    \IF{$\overline{\mu}_{a^*_t} < \xi$}
        \STATE Identify $a^*_t$ as bad
        \STATE remove $a^*_t$ from $\mathcal{A}$
    \ENDIF
\ENDFOR

\end{algorithmic}
\end{algorithm}

\section{Algorithm}
In this section, we proposed the differential GAI algorithm, DGAI, and proved DGAI is $\delta$-PAC.
Next, we show DGAI in linear form and later extend to non-linear cases.
Last we introduced the differential UCB index and followed by the training objectives.
Pseudo codes of all the algorithms are described
in Algorithm~\ref{alg:ours}.

\subsection{Linear Good Arm Identification}
First, we introduce the DGAI in linear bandit matrix forms.
Each arm $i \in \mathcal{A}$ is associated with a known feature vector $\mathbf{x}_i \in \mathbb{R}^d$.
Linear bandit is equivalent to discrete and independent arms using one-hot feature vectors.
The expected reward of each arm $\mu_i=\mathbf{x}_i^T\boldsymbol{\theta}$ follows a linear relationship over  $\mathbf{x}_i$ and an unknown parameter vector $\boldsymbol{\theta}$.
At each round $t$, the confidence bound is defined as 
$|\hat{\mu}_{i,t}-\mu_i|\leq \beta||\mathbf{x}_i||_{\mathbf{V}^{-1}_t}, \ \ \forall i\in \mathcal{A}$ and $\mathbf{V}_t=\sum_{t=1}^T \mathbf{x}_t \mathbf{x}_t^T$ is the Gram matrix up to round $t$.
$\beta$ in the confidence bound can be derived based on different concentration
inequalities under the assumption of the stochastic reward structure.
e.g., Hoeffding inequality~\cite{hoeffding1994probability}, self-normalized~\cite{abbasi2011improved}, Azuma Inequality~\cite{lattimore2018bandit}, Bernstein inequality~\cite{mnih2008empirical}.
The correctness of the output arms can be verified by the following theorem, whose proof is given in Appendix~\ref{proof:PAC}.

\begin{theorem}
    The DGAI algorithm is $\delta$-PAC which outputs a bad arm with probability at most $\delta$.
\label{thm:PAC}
\end{theorem}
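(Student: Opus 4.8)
The plan is to reduce the event of outputting a bad arm to a failure of the linear confidence region, and then to control that failure probability by the self-normalized tail inequality for vector-valued martingales. First I would fix a bad arm, i.e.\ an arm $i$ with $\mu_i < \xi$, and examine the only rule in Algorithm~\ref{alg:hdoc} that can accept it as good, namely $\underline{\mu}_{i,t} = \hat{\mu}_{i,t} - \beta\|\mathbf{x}_i\|_{\mathbf{V}_t^{-1}} \ge \xi$. Rearranging and using $\xi > \mu_i$ gives $\hat{\mu}_{i,t} - \mu_i > \beta\|\mathbf{x}_i\|_{\mathbf{V}_t^{-1}}$, so a misidentification at any round $t$ forces the confidence bound $|\hat{\mu}_{i,t}-\mu_i| \le \beta\|\mathbf{x}_i\|_{\mathbf{V}_t^{-1}}$ to be violated. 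Hence the error event is contained in the event that some arm's confidence bound fails at some round.

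Next I would establish the confidence region uniformly over time. Writing $\hat{\mu}_{i,t} = \mathbf{x}_i^T\hat{\boldsymbol{\theta}}_t$, where $\hat{\boldsymbol{\theta}}_t$ is the regularized least-squares estimate of $\boldsymbol{\theta}$, Cauchy--Schwarz in the $\mathbf{V}_t$-norm gives
\begin{equation}
|\hat{\mu}_{i,t} - \mu_i| = |\mathbf{x}_i^T(\hat{\boldsymbol{\theta}}_t - \boldsymbol{\theta})| \le \|\mathbf{x}_i\|_{\mathbf{V}_t^{-1}}\,\|\hat{\boldsymbol{\theta}}_t - \boldsymbol{\theta}\|_{\mathbf{V}_t}.
\end{equation}
Thus it suffices to bound $\|\hat{\boldsymbol{\theta}}_t - \boldsymbol{\theta}\|_{\mathbf{V}_t} \le \beta$ simultaneously for all $t$. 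This is exactly the content of the self-normalized bound of~\cite{abbasi2011improved}: under the assumed sub-Gaussian reward structure there is a choice $\beta = \beta(\delta)$ for which the good event $G = \{\forall t:\ \|\hat{\boldsymbol{\theta}}_t - \boldsymbol{\theta}\|_{\mathbf{V}_t} \le \beta\}$ holds with probability at least $1-\delta$. The essential point is that this inequality is uniform in $t$, so no separate union bound over the rounds $t\in\mathbb{N}$ is required; and because the ellipsoid controls every direction $\mathbf{x}_i$ at once, no union bound over the $K$ arms is required either.

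Finally I would combine the two steps. On $G$, the Cauchy--Schwarz estimate shows $|\hat{\mu}_{i,t}-\mu_i| \le \beta\|\mathbf{x}_i\|_{\mathbf{V}_t^{-1}}$ for every arm and every round, so by the first step no bad arm can ever satisfy the acceptance rule $\underline{\mu}_{i,t}\ge\xi$. Therefore the event of outputting a bad arm is contained in $G^c$, which yields $\mathbb{P}[\text{output a bad arm}] \le \mathbb{P}[G^c] \le \delta$, establishing the claim.

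I expect the main obstacle to be the second step: justifying a confidence width $\beta$ that is valid simultaneously for all $t$ rather than at a single fixed round. A naive per-round bound would require a union bound over $t\in\mathbb{N}$ and either diverge or force a round-dependent $\beta$; the martingale self-normalized argument is precisely what makes the anytime guarantee go through. Once that uniform ellipsoid is in hand, the reduction from the acceptance rule to a confidence-bound violation and the final probability estimate are routine.
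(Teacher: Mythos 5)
Your proof is correct under the premise the paper itself sets up --- namely that the confidence width is instantiated from a concentration inequality such as the self-normalized bound of \cite{abbasi2011improved} (note the paper's identification rule uses $\alpha$ rather than $\beta$, a purely notational difference) --- but it follows a genuinely different route from the paper's own proof. You collapse the error event into a single failure of the uniform-in-time confidence ellipsoid $\{\exists t:\ \|\hat{\boldsymbol{\theta}}_t-\boldsymbol{\theta}\|_{\mathbf{V}_t}>\beta\}$ and control it once, via the self-normalized martingale inequality together with Cauchy--Schwarz, so no union bound over arms or over rounds is needed. The paper instead follows the $(\lambda,\delta)$-PAC template of \cite{kano2019good} for HDoC: it splits into the two cases of Definition~\ref{lambdaPAC} (at least $\lambda$ good arms versus fewer), applies a union bound over individual arms whose upper or lower confidence bounds cross $\xi$ at some round, and bounds each resulting term by invoking Lemma~\ref{lemma:gamma_lemma}. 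Comparing the two: the paper's decomposition nominally yields the full $(\lambda,\delta)$-PAC property (it also bounds the probability that a good arm is rejected as bad, and the probability of over-reporting $\hat{m}\ge\lambda$), while you prove only the ``no bad arm is output'' half stated in the theorem; however, your event $G$ would cover those cases with one extra line, since $\overline{\mu}_{j,t}<\xi\le\mu_j$ likewise forces a confidence violation. On the other hand, your probabilistic core is considerably more solid than the paper's: the paper's key step bounds a tail probability of confidence-bound violations, $\mathbb{P}[\bigcup_{n}\{\overline{\mu}_{j,n}<\xi\}]$, by a softmax selection probability $p_{i,t}$ ``by Lemma~\ref{lemma:gamma_lemma},'' which is a category mismatch --- that lemma controls which arm the policy samples, not whether empirical means concentrate --- whereas the Abbasi--Yadkori ellipsoid is exactly the tool that makes the anytime guarantee rigorous, as you correctly identify. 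One caveat applies to both proofs equally: in DGAI the widths $\alpha,\beta$ are learned by gradient ascent, with the coverage constraint only softly enforced through Lagrange multipliers, so neither argument literally applies to the learned parameters; both are statements about a theoretically calibrated width.
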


\subsection{Differentiable UCB index}
In DGAI, our objective is to learn the magnitude of the confidence bound in a data-driven manner without relying on prior assumptions about the unknown reward distribution. The design of our UCB index and associated lemmas primarily follows~\cite{yang2020differentiable}.
Let the arm with the largest lower confidence bound at round $t$ as $i_*=\arg\max_{i \in\mathcal{A}}\hat{\mu}_{i,t}-U_{i,t}$.
We then define:
\begin{align}
\phi_{i,t}  = U_{i,t} + U_{i_*,t}  \text{\ \  and \ \  }   \hat{\Delta}_{i,t}=\hat{\mu}_{i_*,t}-\hat{\mu}_{i,t} 
\label{eq:diff-comb}
\end{align}
and $\hat{\Delta}_{i,t}$ is the estimated reward gap between $i_*$ and $i$.
The  UCB index $S_{i,t}$ is defined as 
\begin{equation}
\label{eq:S}
    S_{i,t}=\beta\phi_{i,t}-\hat{\Delta}_{i,t}\,
\end{equation}

\begin{lemma}
The UCB index $S$ provides the following information:
$S_{i,t}<0$, i.e., $\mu_*-\mu_i>0$, implies arm $i$ is a sub-optimal arm. $S_{i,t}\geq S_{j,t}\geq 0$, i.e., $\hat{\mu}_{i,t}+U_{i,t}\geq \hat{\mu}_{j,t}+U_{j,t}$, implies arm i has larger upper confidence bound.
\end{lemma}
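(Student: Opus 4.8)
The plan is to establish both assertions by directly unfolding the definitions of $\phi_{i,t}$, $\hat{\Delta}_{i,t}$, and $S_{i,t}$ in~\eqref{eq:diff-comb}--\eqref{eq:S}, and then---for the first assertion only---invoking the validity of the confidence intervals. The key observation is that, up to shared offsets coming from the reference arm $i_*$, the index $S_{i,t}$ is just a rewriting of the gap between the upper confidence bound of arm $i$ and the lower confidence bound of $i_*$. Once this is seen, both statements collapse to elementary inequalities.

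For the first assertion I would start from $S_{i,t}<0$, substitute $\phi_{i,t}=U_{i,t}+U_{i_*,t}$ and $\hat{\Delta}_{i,t}=\hat{\mu}_{i_*,t}-\hat{\mu}_{i,t}$, and rearrange $\beta(U_{i,t}+U_{i_*,t})<\hat{\mu}_{i_*,t}-\hat{\mu}_{i,t}$ into
\[
\hat{\mu}_{i,t}+\beta U_{i,t}\;<\;\hat{\mu}_{i_*,t}-\beta U_{i_*,t},
\]
that is, $\overline{\mu}_{i,t}<\underline{\mu}_{i_*,t}$: the upper confidence bound of $i$ lies strictly below the lower confidence bound of $i_*$. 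Conditioning on the event that every interval is valid, i.e.\ $|\hat{\mu}_{k,t}-\mu_k|\le\beta U_{k,t}$ for all $k\in\mathcal{A}$, I then chain $\mu_i\le\overline{\mu}_{i,t}<\underline{\mu}_{i_*,t}\le\mu_{i_*}\le\mu_*$, where the final inequality is immediate since $\mu_*$ is the largest mean. This gives $\mu_*-\mu_i>0$, so arm $i$ is sub-optimal.

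For the second assertion I would form the difference $S_{i,t}-S_{j,t}$. Since $\phi_{i,t}$ and $\phi_{j,t}$ share the term $U_{i_*,t}$, and $\hat{\Delta}_{i,t}$ and $\hat{\Delta}_{j,t}$ share the term $\hat{\mu}_{i_*,t}$, these reference-arm contributions cancel, leaving
\[
S_{i,t}-S_{j,t}=(\hat{\mu}_{i,t}+\beta U_{i,t})-(\hat{\mu}_{j,t}+\beta U_{j,t})=\overline{\mu}_{i,t}-\overline{\mu}_{j,t}.
\]
Hence $S_{i,t}\ge S_{j,t}$ is exactly equivalent to $\overline{\mu}_{i,t}\ge\overline{\mu}_{j,t}$, the ordering of upper confidence bounds; the extra hypothesis $S_{j,t}\ge 0$ merely restricts the comparison to arms not already flagged as sub-optimal by the first assertion.

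The only genuine subtlety---and the step I would treat most carefully---is that the first implication is not deterministic: passing from $\overline{\mu}_{i,t}<\underline{\mu}_{i_*,t}$ to $\mu_i<\mu_{i_*}$ requires the high-probability event on which all confidence bounds hold simultaneously, precisely the event underlying the $\delta$-PAC guarantee of Theorem~\ref{thm:PAC}. I would therefore phrase the first claim as holding on that event, so that the choice of $\beta$ (and hence the interval widths) governs the probability with which the certificate $S_{i,t}<0$ correctly identifies a sub-optimal arm. The second claim, being a pure algebraic identity, needs no such caveat.
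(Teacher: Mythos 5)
Your proof is correct. Note that the paper itself never proves this lemma---it is stated bare, with the design and its justification deferred to the SoftUCB paper of Yang et al.\ (2020)---and your argument is essentially the standard one used there: unfold the definitions to see that $S_{i,t}<0$ is equivalent to $\hat{\mu}_{i,t}+\beta U_{i,t}<\hat{\mu}_{i_*,t}-\beta U_{i_*,t}$ (the upper confidence bound of $i$ falling below the lower confidence bound of the reference arm), then chain through the valid-confidence-interval event to conclude $\mu_i<\mu_{i_*}\le\mu_*$, while the second claim is the purely algebraic cancellation $S_{i,t}-S_{j,t}=(\hat{\mu}_{i,t}+\beta U_{i,t})-(\hat{\mu}_{j,t}+\beta U_{j,t})$. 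Your explicit caveat that the first implication holds only on the high-probability event where all confidence bounds are valid is exactly right, and in fact cleans up an imprecision in the paper's phrasing (``$S_{i,t}<0$, i.e., $\mu_*-\mu_i>0$''), which presents a probabilistic implication as if it were an identity; likewise your observation that $S_{j,t}\ge 0$ plays no role in the algebra is correct---the paper's lemma statement carries that hypothesis only to restrict attention to arms in $\mathcal{U}_t$.
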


For each round $t \in [T]$, the likelihood of selecting arm $i$ is: 
\begin{equation}
\label{eq:P}
    p_{i,t}=\frac{\exp(\gamma_t S_{i,t})}{\sum_{j=1}^K \exp(\gamma_t S_{j,t})}~,~ \gamma_t=\frac{\log\left(\frac{\delta |\mathcal{L}_t|}{1-\delta}\right)}{\tilde{S}_{\text{max},t}}
\end{equation}

where $\gamma_t>0$ is the coldness parameter modulating the concentration of the policy distribution, as elucidated in Lemma~\ref{lemma:gamma_lemma}.
The set $\mathcal{L}_t$ consists of suboptimal arms (i.e., $S_{t}<0$), while $\mathcal{U}_t$ holds non-suboptimal arms (i.e., $S_{t}\geq 0$). We have  $\mathcal{U}_t\cup \mathcal{L}_t=\mathcal{A}$ and $\mathcal{U}_t\cap \mathcal{L}_t=\emptyset$.
$\tilde{S}_{\text{max},t}=\max_{i \in \mathcal{U}_t}S_{i,t}$,   $|\mathcal{L}_t|$ denotes the cardinality of $\mathcal{L}_t$ and $\delta$ is a probability hyper-parameter.
\begin{lemma}
At any round $t\in [T]$, for any $\delta\in (0,1)$, setting $\gamma_t\geq \log(\frac{\delta|\mathcal{L}_t|}{1-\delta})/\tilde{S}_{\text{max},t}$ guarantees that $p_{\mathcal{U}_t}=\sum_{i\in \mathcal{U}_t}p_{i,t}\geq \delta $ and 
$p_{\mathcal{L}_t}=\sum_{i\in \mathcal{L}_t}p_{i,t}<1-\delta$ such that sub-optimal arms ($i \in L_t$) will be selected with near zero probability when $\delta \approx 1$.
\label{lemma:gamma_lemma}
\end{lemma}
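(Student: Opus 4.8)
The plan is to reduce the lemma to a single inequality. Since $\mathcal{U}_t$ and $\mathcal{L}_t$ partition $\mathcal{A}$, we have $p_{\mathcal{U}_t}+p_{\mathcal{L}_t}=1$, so it suffices to prove $p_{\mathcal{U}_t}\ge\delta$; the companion bound $p_{\mathcal{L}_t}<1-\delta$ is then immediate, and its strictness will fall out of a strict inequality established along the way. I would first dispose of the trivial case $\mathcal{L}_t=\emptyset$ (then $p_{\mathcal{U}_t}=1$ and $p_{\mathcal{L}_t}=0$), so that $|\mathcal{L}_t|\ge1$ in the main argument, and I would work in the regime $\gamma_t>0$, which is exactly the $\delta\approx1$ regime of interest in which the stated threshold $\log(\delta|\mathcal{L}_t|/(1-\delta))/\tilde{S}_{\text{max},t}$ is positive.

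First I would split the normalizing constant over the partition and write
\[
p_{\mathcal{U}_t}=\frac{\sum_{i\in\mathcal{U}_t}e^{\gamma_t S_{i,t}}}{\sum_{i\in\mathcal{U}_t}e^{\gamma_t S_{i,t}}+\sum_{i\in\mathcal{L}_t}e^{\gamma_t S_{i,t}}}.
\]
Then I would bound the two sums separately. The numerator contains the maximizing arm, so $\sum_{i\in\mathcal{U}_t}e^{\gamma_t S_{i,t}}\ge e^{\gamma_t\tilde{S}_{\text{max},t}}$. For the $\mathcal{L}_t$ sum, every arm has $S_{i,t}<0$ by definition of $\mathcal{L}_t$, so with $\gamma_t>0$ each term satisfies $e^{\gamma_t S_{i,t}}<1$, giving $\sum_{i\in\mathcal{L}_t}e^{\gamma_t S_{i,t}}<|\mathcal{L}_t|$ strictly. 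Using that $x\mapsto x/(x+c)$ is increasing in $x$ and decreasing in $c>0$, these two bounds combine into the strict lower bound $p_{\mathcal{U}_t}>e^{\gamma_t\tilde{S}_{\text{max},t}}/\bigl(e^{\gamma_t\tilde{S}_{\text{max},t}}+|\mathcal{L}_t|\bigr)$.

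It then remains to show the right-hand side is at least $\delta$. Rearranging $e^{\gamma_t\tilde{S}_{\text{max},t}}/(e^{\gamma_t\tilde{S}_{\text{max},t}}+|\mathcal{L}_t|)\ge\delta$ gives $e^{\gamma_t\tilde{S}_{\text{max},t}}\ge\delta|\mathcal{L}_t|/(1-\delta)$, and taking logs (valid since $\tilde{S}_{\text{max},t}>0$) this reads $\gamma_t\ge\log(\delta|\mathcal{L}_t|/(1-\delta))/\tilde{S}_{\text{max},t}$, precisely the hypothesis. Crucially the lower bound is itself monotone increasing in $\gamma_t$, so the hypothesis $\gamma_t\ge$(threshold) yields $p_{\mathcal{U}_t}>\delta$ across the whole range, not merely at the threshold value; hence $p_{\mathcal{L}_t}=1-p_{\mathcal{U}_t}<1-\delta$.

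I expect there to be no single hard step but rather boundary bookkeeping: ensuring $\tilde{S}_{\text{max},t}>0$ so that both the division defining $\gamma_t$ and the logarithm are well posed, and confirming $\gamma_t>0$ so that $e^{\gamma_t S_{i,t}}<1$ holds on $\mathcal{L}_t$. Both conditions are met in the intended $\delta\to1$ regime, where the assertion that sub-optimal arms are selected with vanishing probability is the meaningful content. The inequality chain itself is elementary once the numerator is anchored at the $\tilde{S}_{\text{max},t}$ term.
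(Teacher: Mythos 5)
Your proof is correct and is essentially \emph{the} standard argument for this statement: the paper itself never proves Lemma~\ref{lemma:gamma_lemma} (it inherits it from the cited SoftUCB work of Yang and Toni), and the proof given there is exactly your decomposition --- anchor the $\mathcal{U}_t$ sum from below by $e^{\gamma_t \tilde{S}_{\text{max},t}}$, bound the $\mathcal{L}_t$ sum above by $|\mathcal{L}_t|$ using $S_{i,t}<0$, and rearrange $e^{\gamma_t \tilde{S}_{\text{max},t}}/(e^{\gamma_t \tilde{S}_{\text{max},t}}+|\mathcal{L}_t|)\ge\delta$ into the stated threshold on $\gamma_t$. If anything, your version is more careful than the source: the restriction to $\gamma_t>0$ (equivalently $\delta|\mathcal{L}_t|/(1-\delta)>1$) that you make explicit is genuinely needed for the step $e^{\gamma_t S_{i,t}}<1$ on $\mathcal{L}_t$, and the lemma as literally stated (``for any $\delta\in(0,1)$'') can fail when the threshold is negative and $\gamma_t$ is taken negative, so flagging that regime is a real, not cosmetic, point.
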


It is worth noting that DGAI can readily be extended to non-linear versions. Appendix~\ref{sec:nonlinear} provides details on these adaptations. Remark~\ref{app:rmk:kernel} describes the adaptation to the kernelized DGAI variant by incorporating KernelUCB~\cite{valko2013finite}, and Remark~\ref{app:rmk:neural} showcases the transition to a neural network version of DGAI by incorporating NeuralUCB~\cite{zhou2020neural}. A more comprehensive experiment and theoretical analysis are reserved for future studies.

\subsection{Training Objectives}
The optimization objective for sampling strategy and identification criteria are independent and learned separately.
The optimization objective of the sampling strategy is to maximize the cumulative reward with constraints shown as the following differentiable function over $\beta$:
\begin{equation}
\label{eq:offline_sample_objective}
\begin{split}
    \max_{\beta}\sum_{t=1}^T\mathbb{E}[y_t]
    =\max_{\beta}\sum_{t=1}^T \sum_{i=1}^K p_{i,t}\mu_i~,
    \ \ \\s.t. \  |\mu_i-\hat{\mu}_{i,t}| \lessapprox \beta||\mathbf{x}_i||_{\mathbf{V}^{-1}_t},
    \ \ \forall i\in \mathcal{A}, t\in [T]
\end{split}
\end{equation}
The addition constraint force to tighten the upper confidence bound as low as possible and ensures that $\beta$ is minimal but $\beta||\mathbf{x}_i||_{\mathbf{V}^{-1}_t}$ is still indeed an actual upper confidence bound at any round $t \in [T]$ for any arm $i \in \mathcal{A}$.
Applying the Lagrange multipliers gives the objective with $\eta$ as the hyperparameter for balancing the constraints:
\begin{equation}
\label{eq:offline_sample_objective_constraint}
\begin{split}
   \max_{\beta}\sum_{t=1}^T \sum_{i=1}^K p_{i,t}\mu_i-\eta_1 C - \eta_2 |C|~, \ \ s.t. \ \ \eta>0\\
   \text{\ \  where \ \  }
   C = |\mu_i-\hat{\mu}_{i,t}|-\beta||\mathbf{x}_i||_{\mathbf{V}^{-1}_t}
\end{split}
\end{equation}

The objective for the identification criteria is to maximize the \textit{Exploit score} which is defined as follow:
\begin{equation}
\label{eq:exploit_score}
    \sum_{i=1}^K (T-t_i) * (\mu_i - \xi)
\end{equation}
where $t_i$ is the round that arm $i$ is identified.
The exploit score gives a positive reward when the arm is identified correctly and increases if the arm is identified earlier.
This conditional reward is formed as an indicator function with the lower confidence bound in the objective over $\alpha$:
\begin{equation}
\label{eq:offline_identify_objective}
\begin{split}
    \max_{\alpha}\sum_{t=1}^T \sum_{i=1}^K \mathbbm{I}[\mu_{i,t} - \alpha \|x_i\|_{V_t^{-1}} > \xi] * (\mu_{i, t} - \xi)\\
    s.t. |\mu_{i,t} - \xi| \gtrapprox \alpha \|x_i\|_{V_t^{-1}},
    \ \ \forall i\in \mathcal{A}, t\in [T]
\end{split}
\end{equation}

The indicator function is relaxed and smoothen with sigmoid function and again the Lagrange multipliers yields the following objective:
\begin{multline}
\label{eq:offline_identify_objective_constraint}
   \max_{\beta}\sum_{t=1}^T \sum_{i=1}^K \sigma((\mu_{i,t} - \alpha \|x_i\|_{V_t^{-1}} - \xi) * M) * (\mu_{i, t} - \xi)
   \\
   -\eta_1 D - \eta_2 |D|~, \ \ s.t. \ \ \eta>0~, \ \ \text{where~}
   \\
   D = \alpha||\mathbf{x}_i||_{\mathbf{V}^{-1}_t}-|\mu_i-\hat{\mu}_{i,t}|~, \ \ \sigma(x)=\frac{1}{1+e^{-x}}
\end{multline}
$\eta$ is for balancing the constraints and $M$ is for tuning the sharpness of the indicator function.

\renewcommand{\algorithmicrequire}{\textbf{Input:}}
\renewcommand{\algorithmicensure}{\textbf{Initialize:}}

\begin{algorithm}[t]

\caption{The differential algorithm to learn the optimal confidence bound for both sampling strategy and identification criteria.}

\begin{algorithmic}[1]
\label{alg:ours}

\REQUIRE $N$, $\mathcal{A}$, $K$, $T$,  $\lambda$, $\eta$
\ENSURE  $\alpha_0=0,~\beta_0=0,~V_0=I\in\mathbb{R}^{d\times d},~b_0\in\mathbb{R}^d,~\gamma_0=0$

\FOR{$n=1$ to $N$}
    \FOR{$t=1$ to $T$}
        \STATE Find $S_{i,t}, \forall i\in \mathcal{A}$ via Eq.~\ref{eq:S} with $\beta$.
        \STATE Find $P_t$ via Eq.~\ref{eq:P}.
        \STATE Select arm $i_t \in \mathcal{A}$ randomly according to $P_t$ and receive reward $y_t$.
        \STATE Update $\mathbf{V}_t\gets \mathbf{V}_{t}+\mathbf{x}_t\mathbf{x}_t^T$, $\mathbf{b}_t\gets \mathbf{b}_{t-1}+\mathbf{x}_ty_t$.
        \STATE Update $\gamma_t$ via Eq.~\ref{eq:P}.
        \STATE (Online) Update $\beta \gets \hat{\beta}_{t-1}+\lambda \nabla_\beta$, $\alpha \gets \hat{\alpha}_{t-1}+\lambda \nabla_\alpha$ via Eq.~\ref{eq:online_objective}
    \ENDFOR
    \STATE (Offline) Update $\beta \gets \hat{\beta}_{t-1}+\lambda \nabla_\beta$, $\alpha \gets \hat{\alpha}_{t-1}+\lambda \nabla_\alpha$ via Eq.~\ref{eq:offline_sample_objective_constraint} and Eq.~\ref{eq:offline_identify_objective_constraint}
\ENDFOR
\end{algorithmic}
\end{algorithm}

\subsection{Training Settings}

We trained our differentiable algorithm in both offline and online settings.
The online setting is more suitable for bandit problems, while the offline setting shows the converged optimal confidence bound and would be the upper bound of the online setting.

\subsubsection{Offline setting}
In the offline setting, we will run multiple epochs of $T$-rounds training trajectories with the identical arm set $\mathcal{A}$ to train $\alpha$ and $\beta$.
First, $\alpha_0$ and $\beta_0$ are initialized.
Both parameters are used to run one entire $T$-rounds training trajectories.
After the trajectory, $\alpha_0$ and $\beta_0$ is updated with Equation~\ref{eq:offline_sample_objective_constraint} and Equation~\ref{eq:offline_identify_objective_constraint} respectively.
After $N$ epochs of trajectories, we have our finalized confidence bound with learned $\alpha_N$ and $\beta_N$.
DGAI for learning the identification criterion in the offline setting is shown in Algorithm~\ref{alg:ours}.
The time complexity for training in the offline setting is $\mathcal{O}(NKT)$.

\subsubsection{Online setting}
In this setting, $\alpha$ and $\beta$ are updated online in one single $T$-rounds trajectory.
First, $\alpha_0$ and $\beta_0$ are initialized and for every $b\in[T]$ round, $\alpha$ and $\beta$ will perform batch update where $b$ is the batch size.

We adopted policy gradient methods, the same as non-episodic reinforcement learning problems~\cite{sutton2018reinforcement} to update the parameters in an online fashion.
Instead of maximizing the cumulative reward until the end of the trajectory, we update the observed cumulative reward up to round $t$ and bootstrapped future reward under the current policy $\boldsymbol{\pi}_t=[p_{1,t}, p_{2,t},..., p_{K,t}]$.
The online objective for sampling and identification are as follows, where $R$ is the reward feedback we obtain in the original objective function Equation~\ref{eq:offline_sample_objective} and Equation~\ref{eq:exploit_score}:
\begin{equation}
\label{eq:online_objective}
\begin{split}
    \left(\sum_{s=1}^t\sum_{i=1}^K R_{i,s}+(T-t)\sum_{i=1}^K R_{i,t}\right)/T
\end{split}
\end{equation}

The time complexity for training in the online setting is ${O}(KT)$.


\begin{figure*}[t]
    \centering
    \begin{subfigure}[b]{0.24\textwidth}
        \centering
        \includegraphics[width=\textwidth]{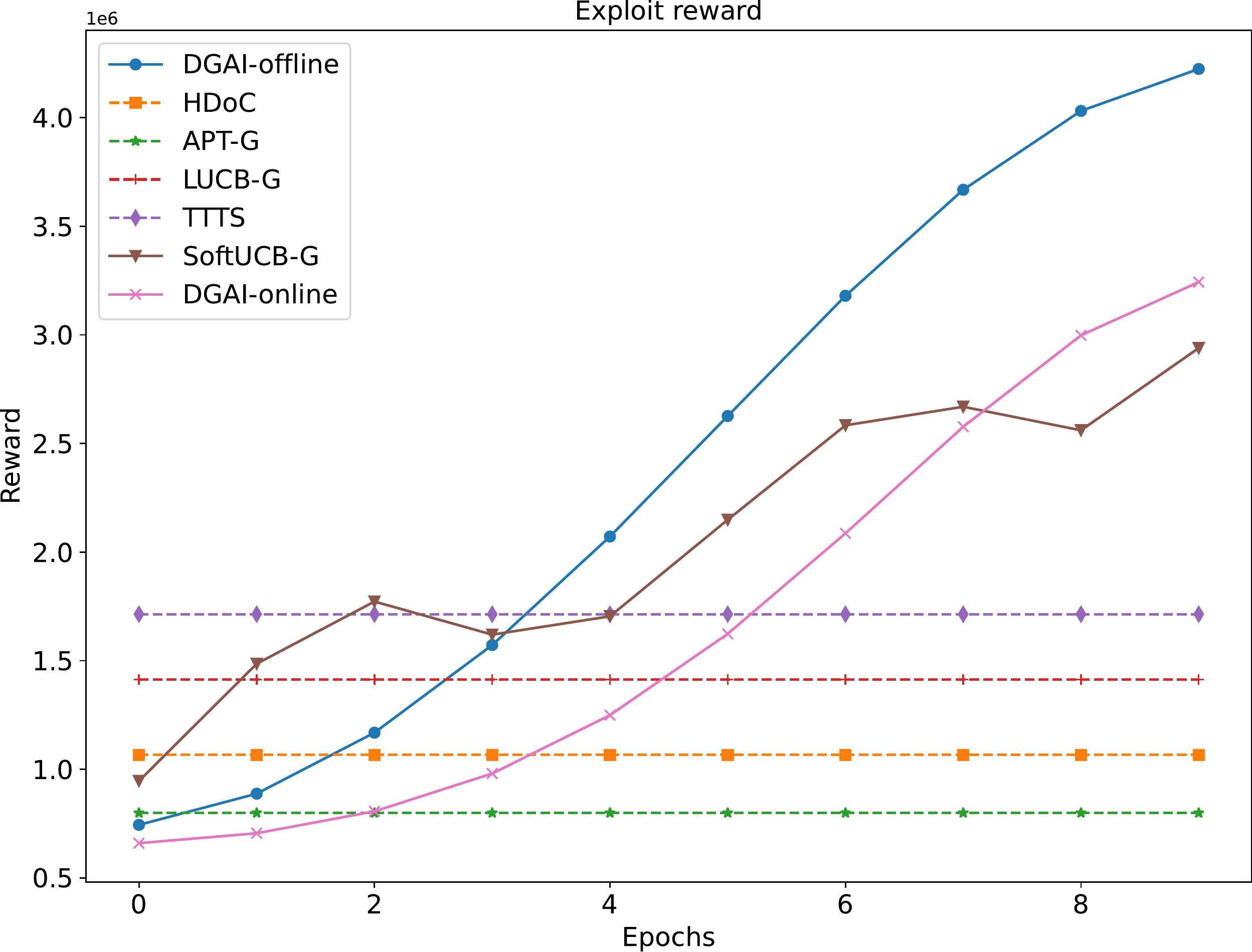}
        \caption{Synth small}
    \end{subfigure}
    \hfill
    \begin{subfigure}[b]{0.24\textwidth}
        \centering
        \includegraphics[width=\textwidth]{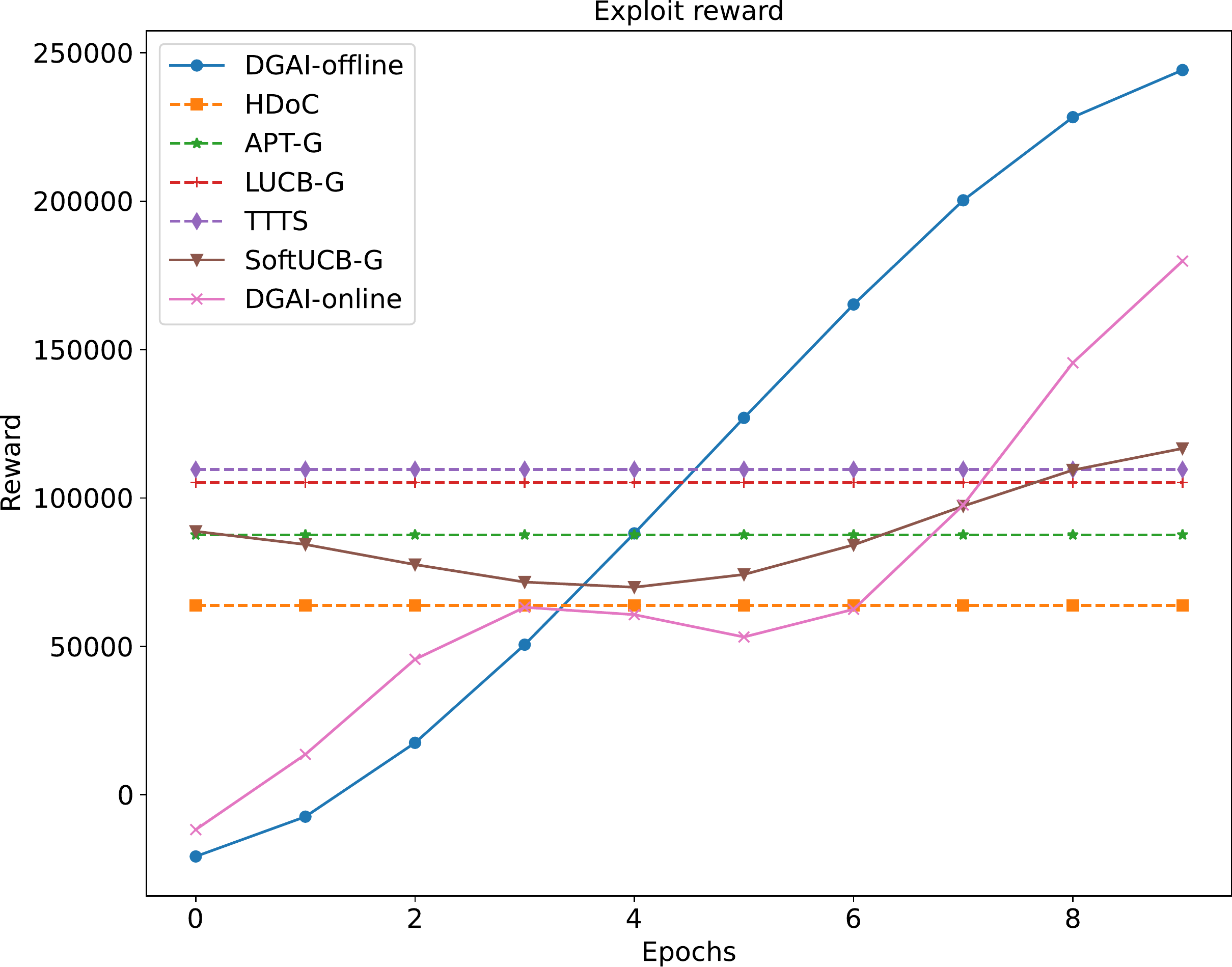}
        \caption{Synth large}
    \end{subfigure}
    \hfill
    \begin{subfigure}[b]{0.24\textwidth}
        \centering
        \includegraphics[width=\textwidth]{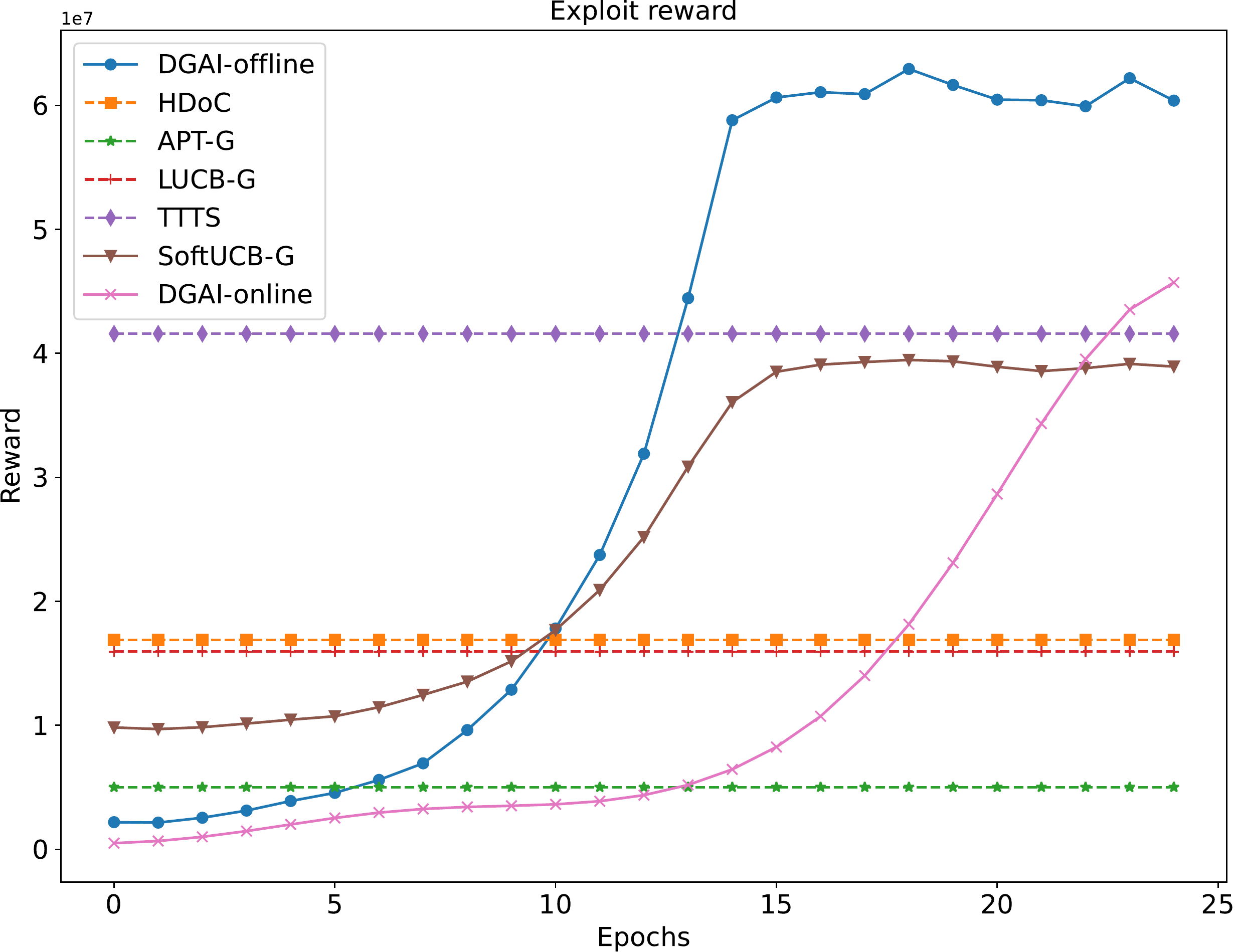}
        \caption{Openbandit}
    \end{subfigure}
    \hfill
    \begin{subfigure}[b]{0.24\textwidth}
        \centering
        \includegraphics[width=\textwidth]{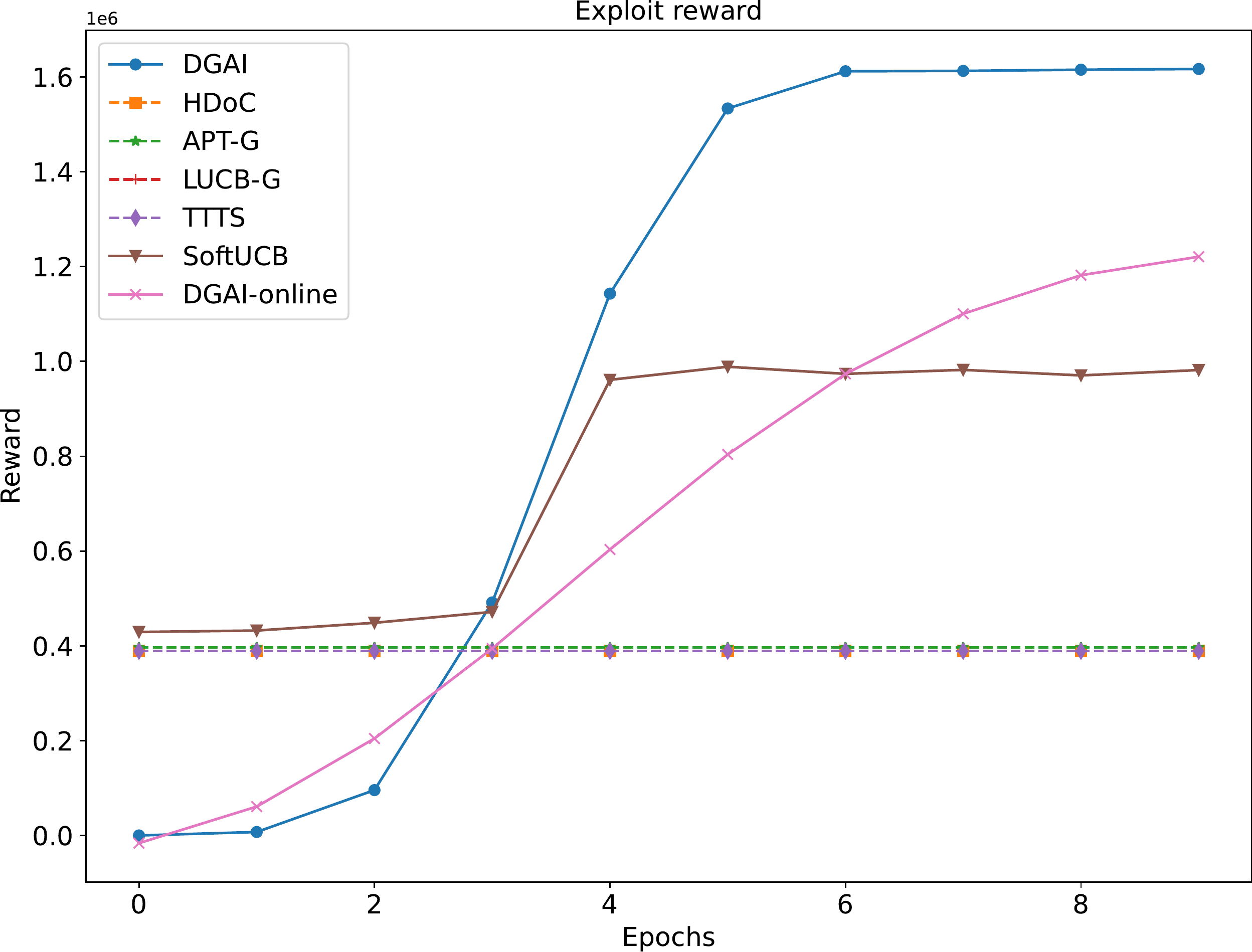}
        \caption{MovieLens}
    \end{subfigure}
    
    
    
    \caption{Comparison of our proposed method with several baselines on all datasets. The cumulative exploit score shows that our proposed method outperforms the baselines in solving GAI problem as the learned confidence bound w.r.t $\alpha,\beta$ converges. The performance in online setting converges slower but also eventually outperform other baselines.}
    \label{fig:exp1}
\end{figure*}
\begin{figure*}[!ht]
    \centering
    \begin{subfigure}[b]{0.24\textwidth}
        \centering
        \includegraphics[width=\textwidth]{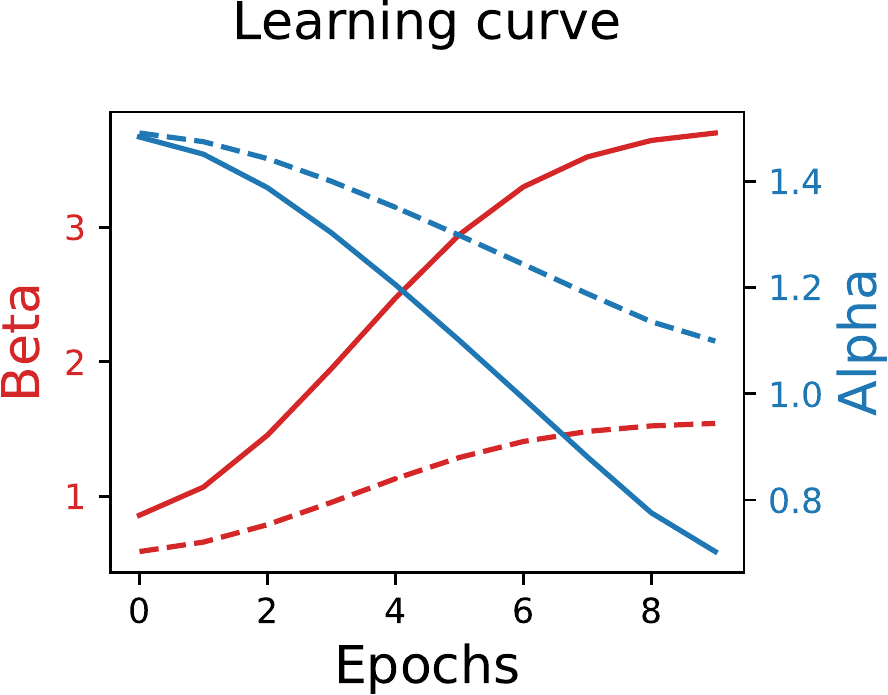}
        \caption{Synth small.}
    \end{subfigure}
    \hfill
    \begin{subfigure}[b]{0.24\textwidth}
        \centering
        \includegraphics[width=\textwidth]{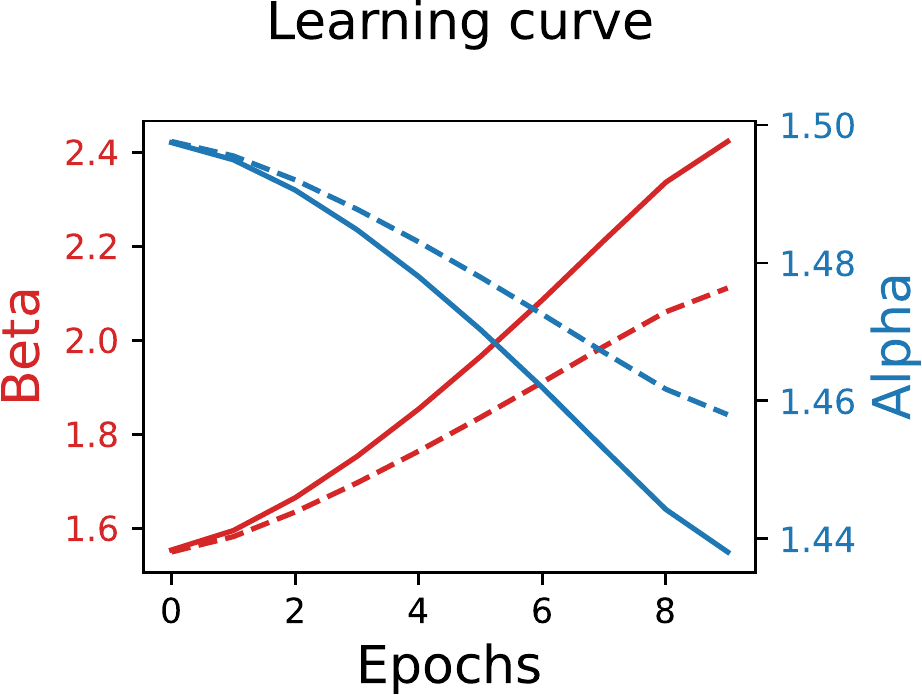}
        \caption{Synth large.}
    \end{subfigure}
    \hfill
    \begin{subfigure}[b]{0.24\textwidth}
        \centering
        \includegraphics[width=\textwidth]{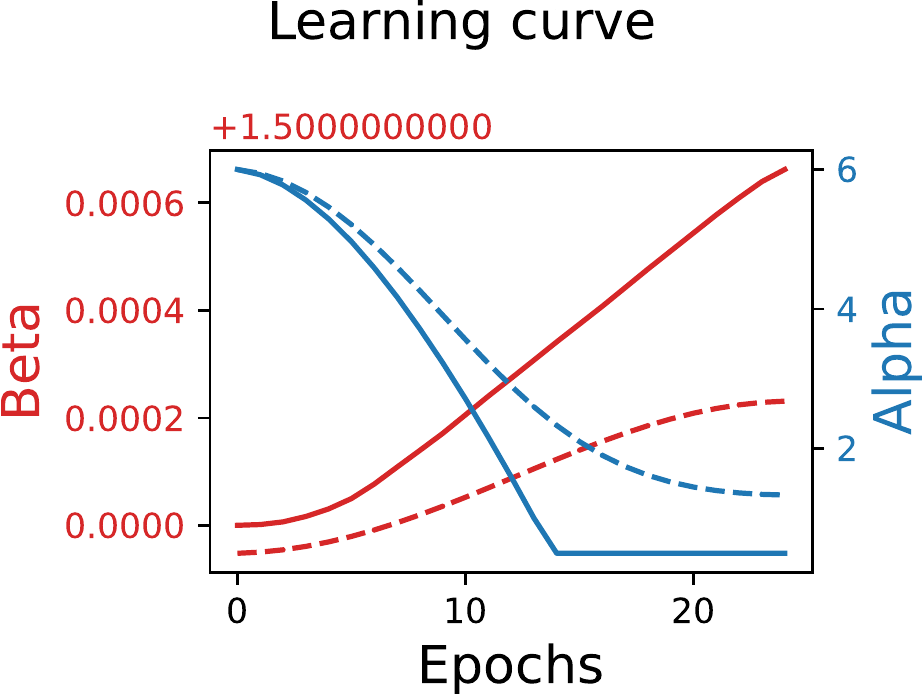}
        \caption{Openbandit.}
    \end{subfigure}
    \hfill
    \begin{subfigure}[b]{0.24\textwidth}
        \centering
        \includegraphics[width=\textwidth]{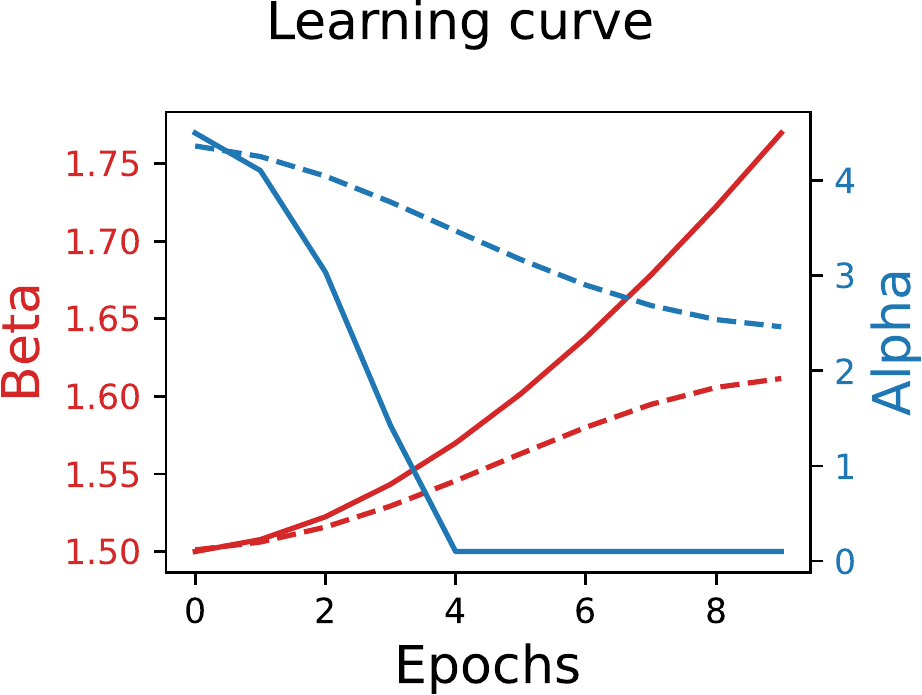}
        \caption{MovieLens.}
    \end{subfigure}
    
    \caption{Learning curves of $\alpha,\beta$. Solid lines represent offline setting and the dash line represent online setting. The horizontal axis in offline setting is training epochs while in online setting is sampling rounds $t\in[T]$. The curve shows that the parameters converges as the training epochs goes on and it converges slower in online setting.}
    \label{fig:learning-curve}
\end{figure*}
\begin{figure*}[!ht]
    \centering
    \begin{subfigure}[b]{0.23\textwidth}
        \centering
        \includegraphics[width=\textwidth]{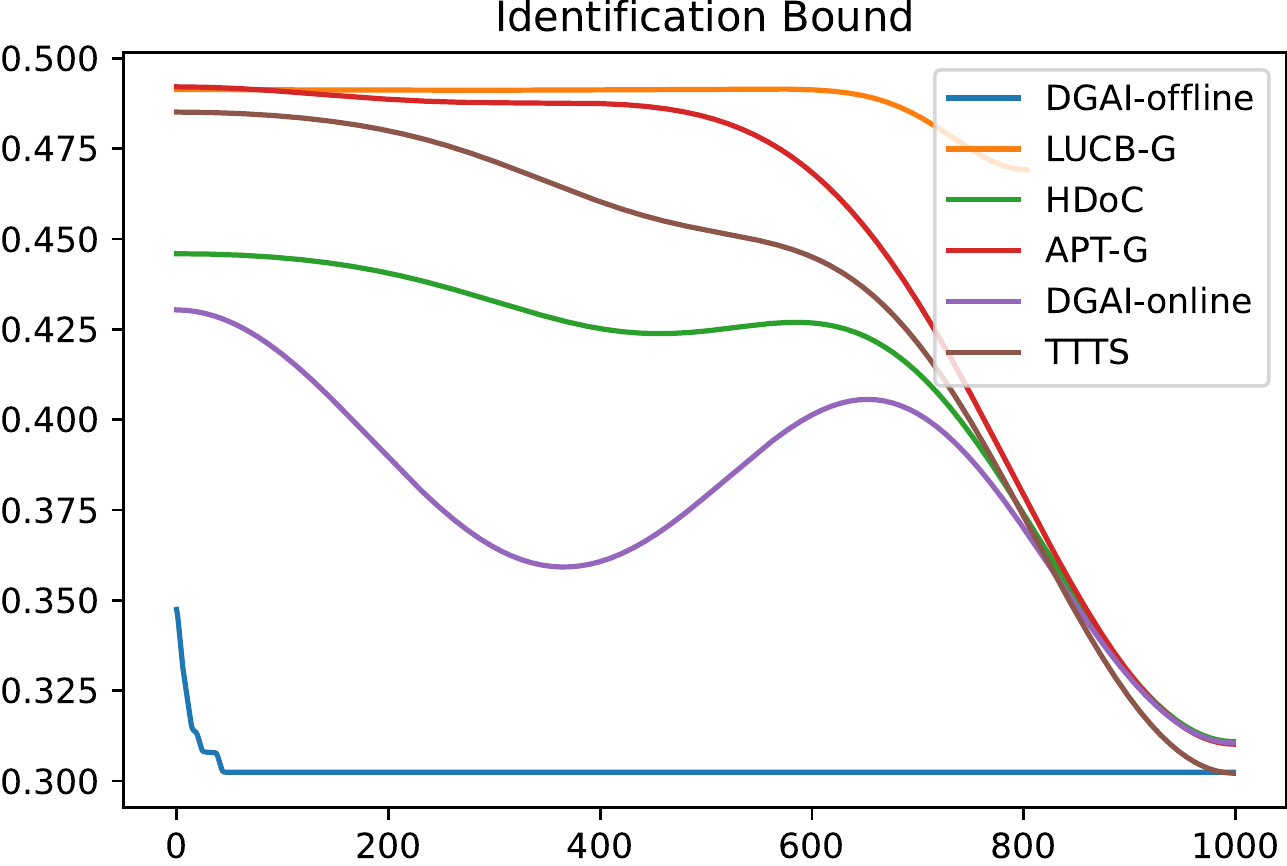}
        \caption{Synth small}
    \end{subfigure}
    \hfill
    \begin{subfigure}[b]{0.23\textwidth}
        \centering
        \includegraphics[width=\textwidth]{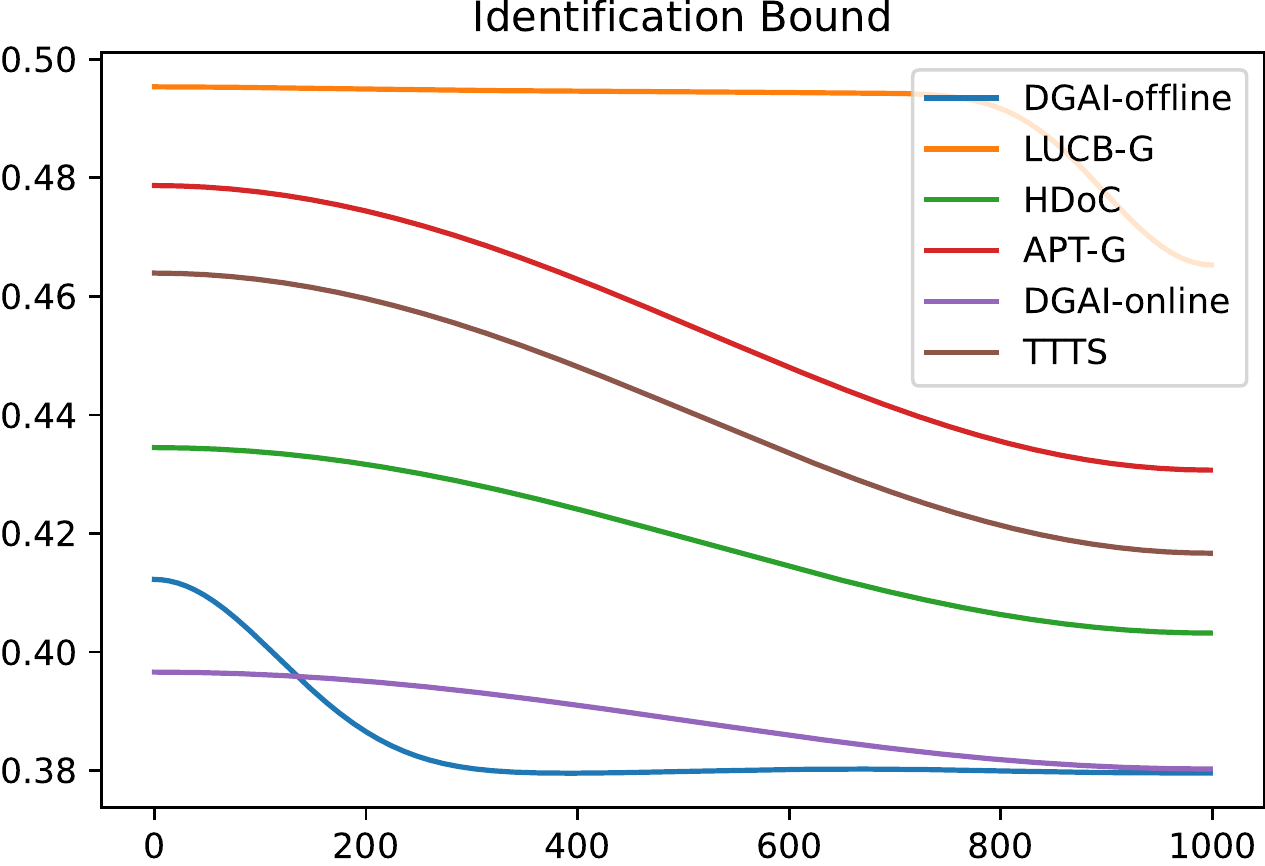}
        \caption{Synth large}
    \end{subfigure}
    \hfill
    \begin{subfigure}[b]{0.23\textwidth}
        \centering
        \includegraphics[width=\textwidth]{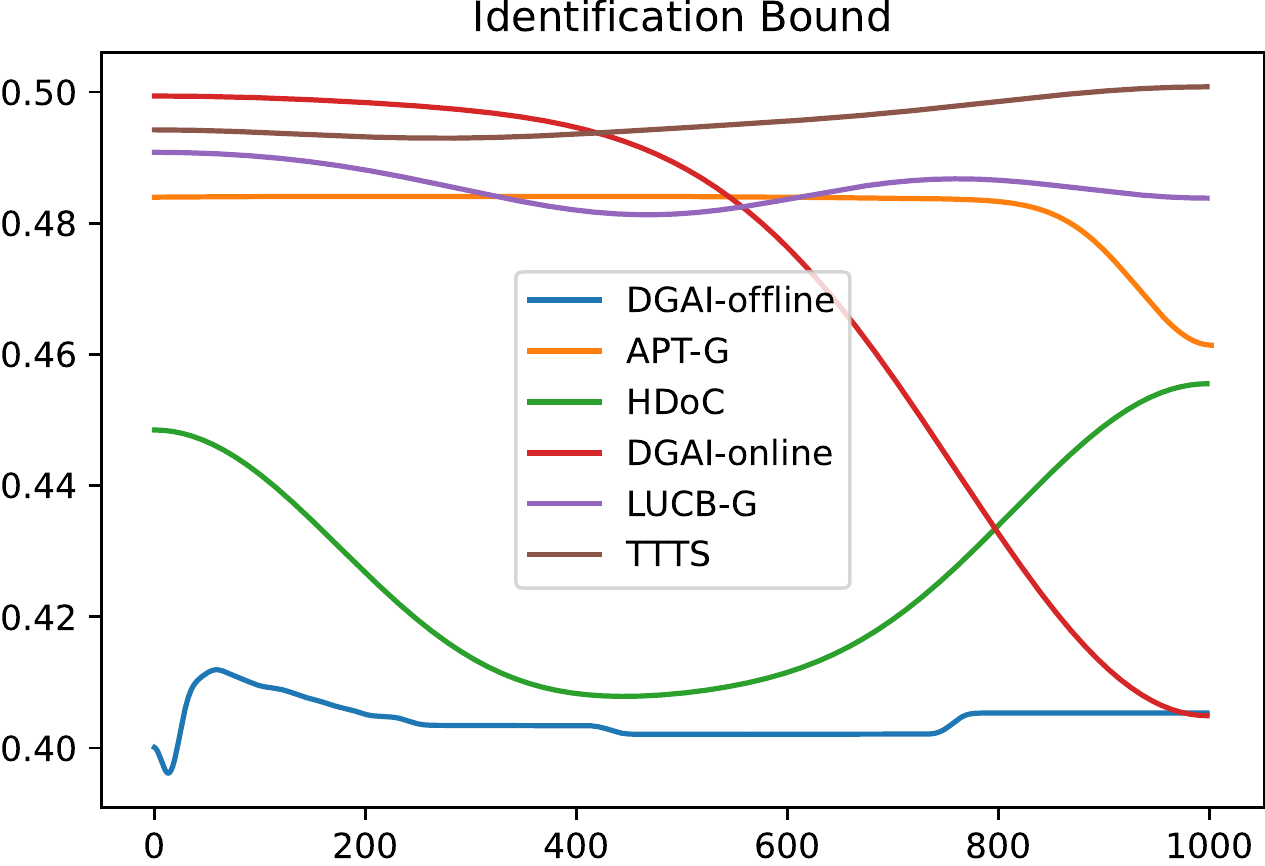}
        \caption{Openbandit}
    \end{subfigure}
    \hfill
    \begin{subfigure}[b]{0.23\textwidth}
        \centering
        \includegraphics[width=\textwidth]{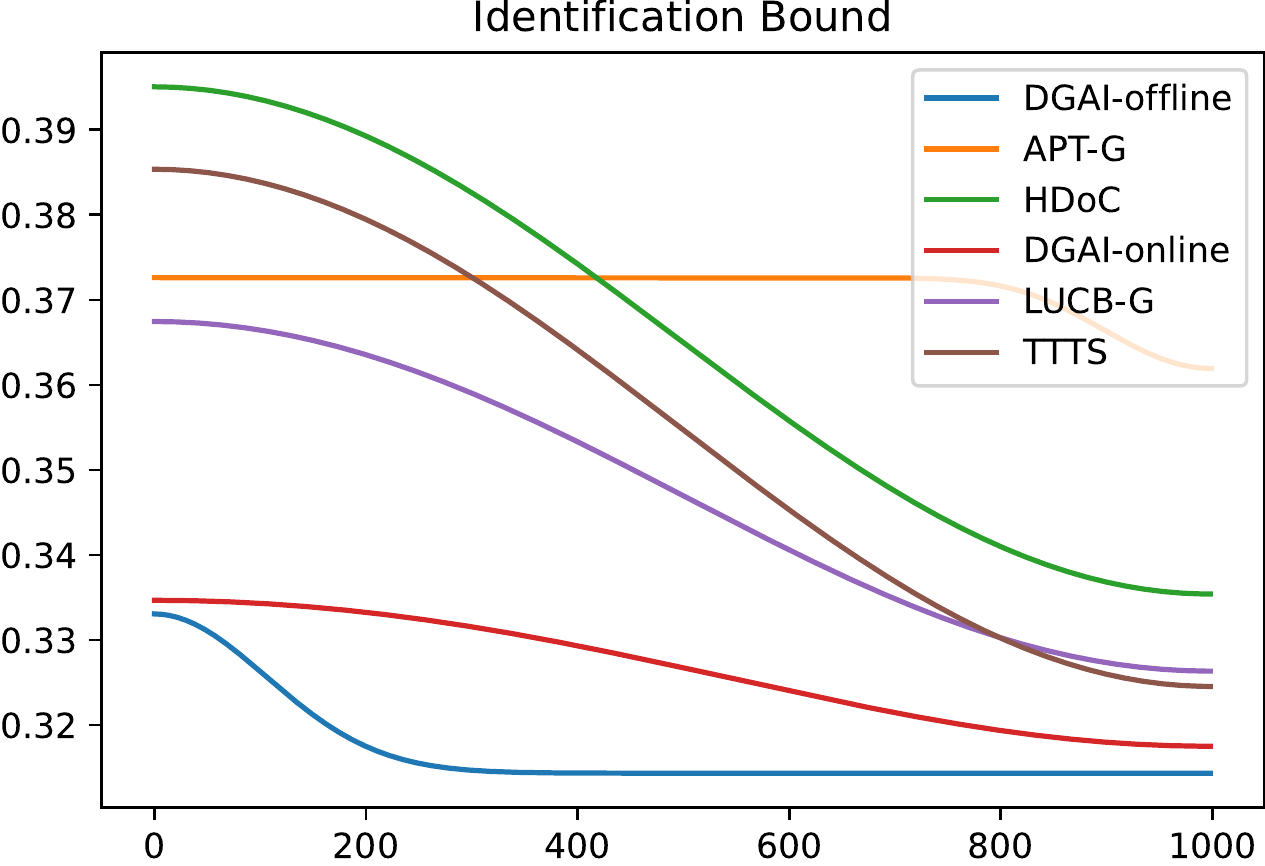}
        \caption{MovieLens}
    \end{subfigure}
    
    \caption{Confidence bound comparison. This figure plots the identification bound for the best arm in the arm set $\mathcal{A}$ during the training trajectory and compare the difference between ours and the baselines.}
    \label{fig:bound-compare}
\end{figure*}
\begin{figure*}[t]
    \centering
    \begin{subfigure}[b]{0.23\textwidth}
        \centering
        \includegraphics[width=\textwidth]{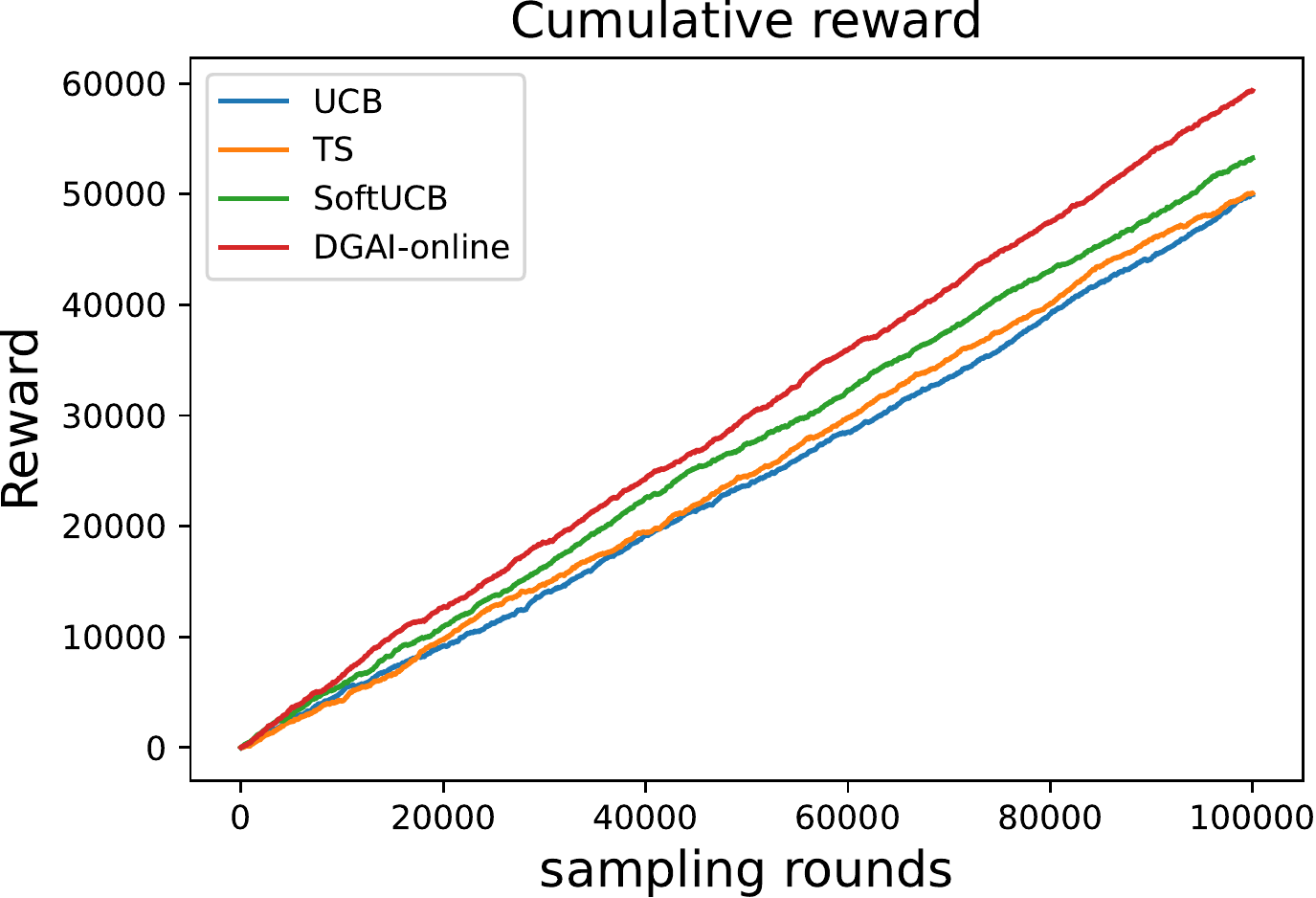}
        \caption{Synth small.}
    \end{subfigure}
    \hfill
    \begin{subfigure}[b]{0.23\textwidth}
        \centering
        \includegraphics[width=\textwidth]{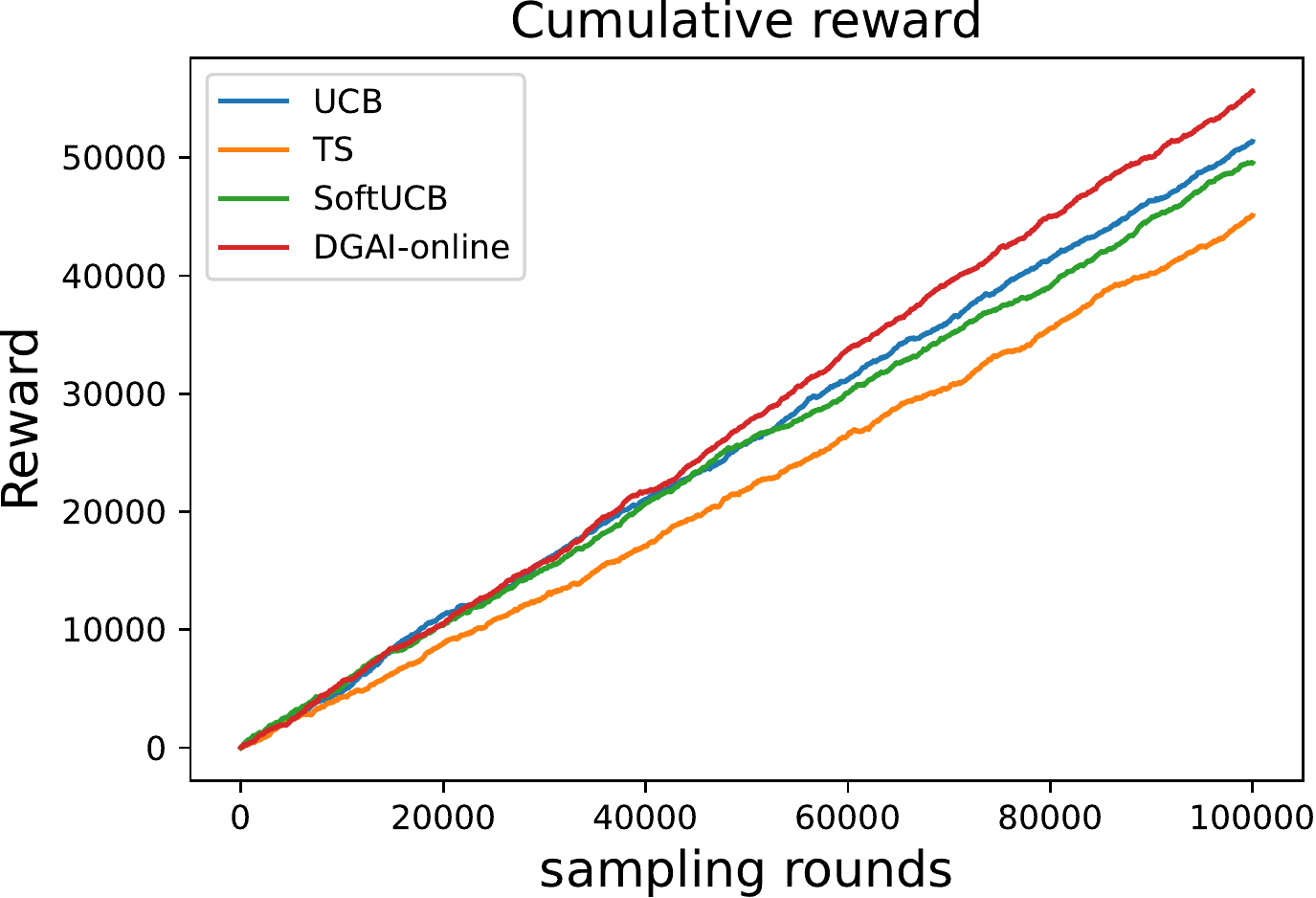}
        \caption{Synth large.}
    \end{subfigure}
    \hfill
    \begin{subfigure}[b]{0.23\textwidth}
        \centering
        \includegraphics[width=\textwidth]{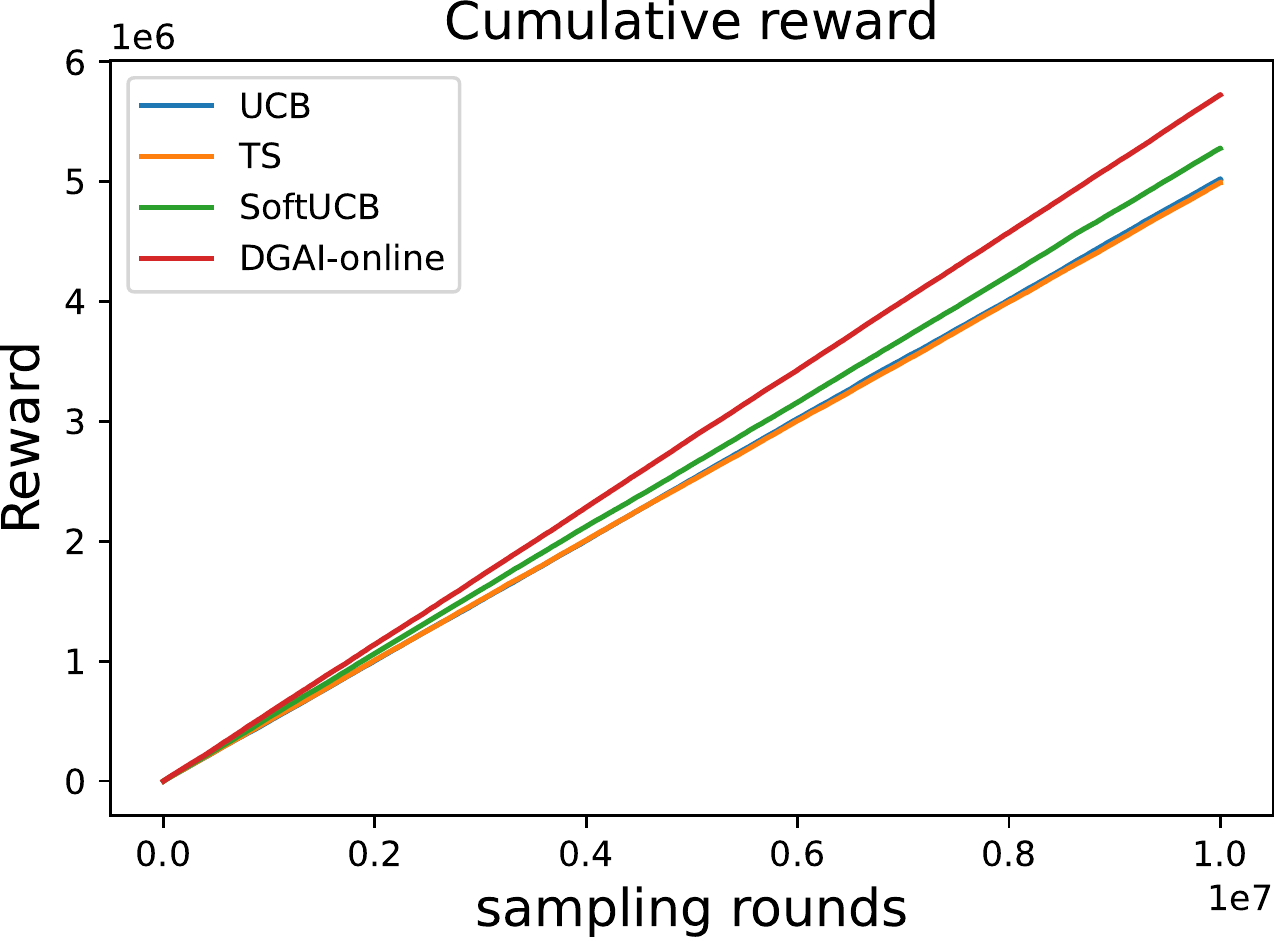}
        \caption{Openbandit.}
    \end{subfigure}
    \hfill
    \begin{subfigure}[b]{0.23\textwidth}
        \centering
        \includegraphics[width=\textwidth]{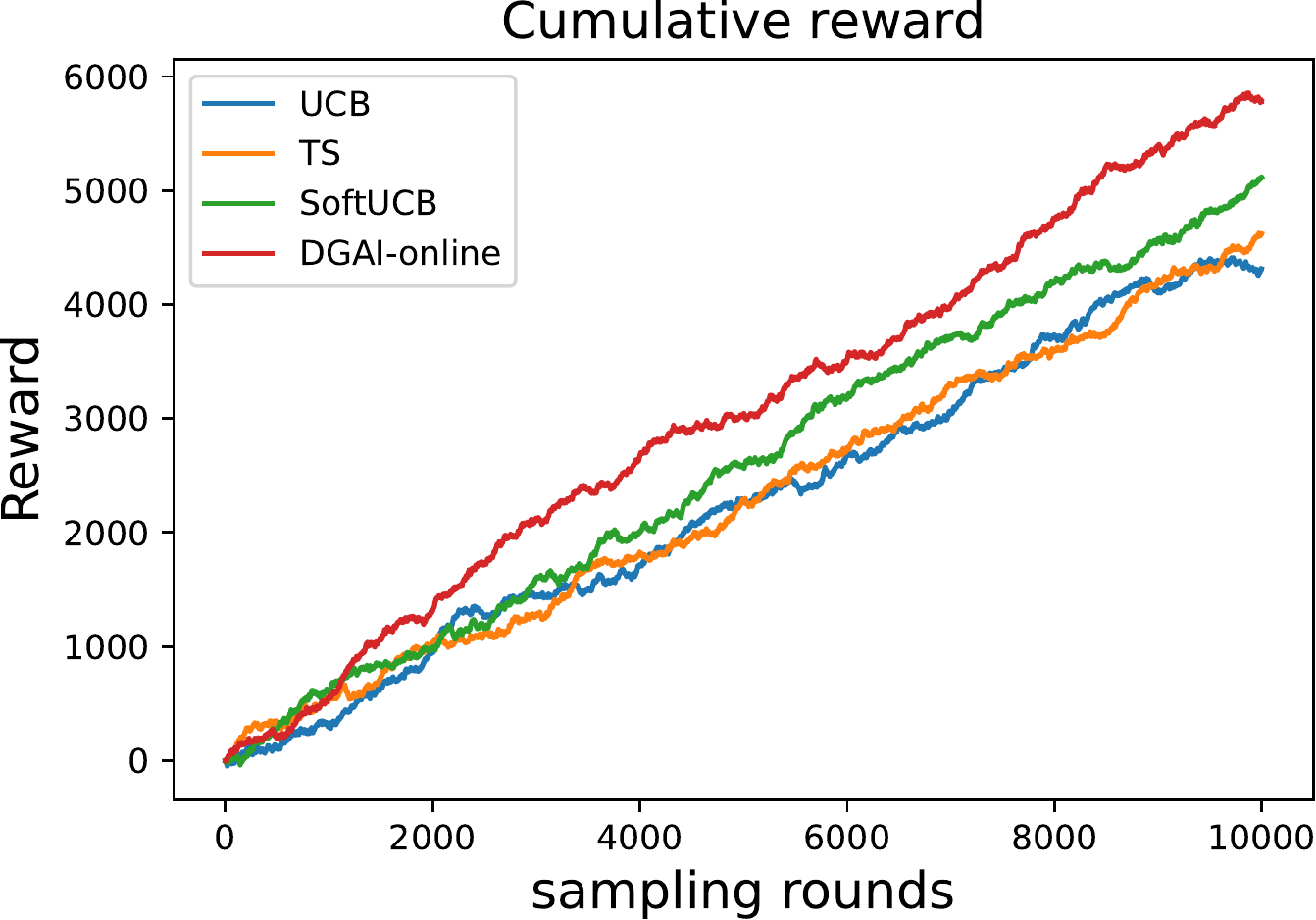}
        \caption{MovieLens.}
    \end{subfigure}
    
    \caption{Comparison of our proposed method with several baselines on all datasets. The cumulative reward shows that our proposed method outperforms the baselines in solving cumulative reward maximization problem as the learned confidence bound w.r.t $\alpha,\beta$ converges to optimal.}
    \label{fig:exp2}
\end{figure*}

\subsection{GAI with Cumulative Reward Maximization}
\label{sec:mab_with_thres}
Classical bandit problems focus on the cumulative reward maximization problem, where the objective is to maximize the sum of the samples collected by the algorithm up to time $T$~\cite{auer1995gambling}.
In this present section, we delve into transforming the "exploration-exploitation dilemma of confidence" that GAI addresses back into a cumulative reward maximization problem.
Specifically, we tackle the cumulative reward maximization by incorporating the additional information of a threshold as prior knowledge of the arm set $A$ allowing us to utilize this threshold to optimize and adjust the confidence bound to maximize cumulative reward.
In identical settings, \cite{pmlr-v48-locatelli16} utilized the APT algorithm and provided a regret bound with optimal threshold $\xi = \mu_*$ as illustrated in theorem~\ref{thm:apt}.
The bandit policy performs more efficiently in terms of regret if the threshold $\xi$ is closer to $\mu_*$ and achieves optimality when $\xi = \mu_*$.
However, since $\mu_*$ is always unknown in reality, the proper threshold must often be estimated beforehand, relying on prior experience.
\begin{theorem}\label{thm:apt}
Let $K > 0, R > 0$ and $T \geq 2K$ and consider a problem where the distribution of the arms is R-sub-Gaussian.
Run with parameters $\xi = \mu^*$ such that
\begin{align*}
Regret(T) = T\mu^* - \mathbb E \sum_{t \leq T} \mu_t  &\leq \inf_{\delta \geq 1} \Big[\sum_{k \not = k^*} \frac{4R^2\log(T)\delta}{\mu^* - \mu_i}\\ 
&+ (\mu^*- \mu_i)(1 + \frac{K}{T^{2\delta - 2}})\Big].
\end{align*}
\end{theorem}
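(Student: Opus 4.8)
The plan is to reduce the regret to a sum of expected suboptimal pull counts and then bound each count through a concentration argument tailored to the choice $\xi = \mu^*$, following the thresholding-bandit analysis of \cite{pmlr-v48-locatelli16}. Writing $N_k(T)$ for the number of pulls of arm $k$ up to round $T$ and using $\sum_k N_k(T) = T$, the regret telescopes into
\begin{equation*}
Regret(T) = \sum_{k \neq k^*} (\mu^* - \mu_k)\, \mathbb{E}[N_k(T)],
\end{equation*}
so it suffices to prove $\mathbb{E}[N_k(T)] \leq \tfrac{4R^2\delta \log T}{(\mu^*-\mu_k)^2} + 1 + \tfrac{K}{T^{2\delta-2}}$ for every suboptimal arm and every $\delta \geq 1$; multiplying by $(\mu^*-\mu_k)$ and taking the infimum over $\delta$ then yields the stated bound, with the leading $1$ accounting for the single initialization pull of each arm.

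Second, I would fix a suboptimal arm $k$ and examine the last round $t$ at which APT pulls it. Because $\xi = \mu^*$, the sampling index that APT minimizes measures, up to the $\sqrt{N}$ weighting, the empirical distance $|\hat{\mu}_k - \mu^*|$; the true distance is $0$ for the optimal arm and exactly $\mu^* - \mu_k > 0$ for arm $k$. Hence, if arm $k$ is selected over $k^*$ at round $t$ after $N_k(t) = n$ pulls, its confidence-weighted empirical distance to $\mu^*$ must be no larger than that of $k^*$, which forces $\hat{\mu}_k$ to lie anomalously close to $\mu^*$. This is the event I will control.

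Third, I would introduce a clean event on which every empirical mean stays within a sub-Gaussian confidence radius of its true mean, using $R$-sub-Gaussianity and a Hoeffding-type deviation inequality together with a union bound over the $K$ arms and the $T$ rounds; the confidence level is chosen so that the per-round failure probability scales like $T^{-2\delta}$, which is exactly what produces the residual $\tfrac{K}{T^{2\delta-2}}$ term after summing the at most $T$ pulls that can occur off the clean event. On the clean event, once $n \geq \tfrac{4R^2\delta\log T}{(\mu^*-\mu_k)^2}$ the radius shrinks below $(\mu^*-\mu_k)/2$, so $|\hat{\mu}_k - \mu^*| \geq (\mu^*-\mu_k)/2$ while the optimal arm's empirical distance is near $0$; the index of $k$ then strictly exceeds that of $k^*$ and arm $k$ is no longer pulled. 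Combining the clean-event pull budget with the off-event contribution gives the required per-arm bound.

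The main obstacle I anticipate is calibrating the three interacting quantities --- the union bound over rounds (the source of the $\log T$ factor), the confidence parameter $\delta$ in the numerator, and the failure-probability residual $T^{2\delta-2}$ in the denominator --- so that they assemble into precisely the claimed constants rather than merely the correct order. A secondary subtlety is that, unlike in relative (best-arm) comparisons, the optimal arm's empirical distance to $\xi = \mu^*$ is not identically zero but only concentrated near zero, so the comparison between the indices of $k$ and $k^*$ must be carried out on the clean event for both arms simultaneously; the choice $\xi = \mu^*$ is what keeps this comparison tight and is the reason optimality is attained exactly at this threshold.
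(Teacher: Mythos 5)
You should first note a point of reference: the paper itself never proves this statement. Theorem~\ref{thm:apt} is quoted from \cite{pmlr-v48-locatelli16} as background motivating Section~\ref{sec:mab_with_thres}, and the only proof supplied in the appendix is that of Theorem~\ref{thm:PAC}. So your proposal can only be measured against the original APT analysis, and at the structural level it does follow that analysis: the decomposition $Regret(T)=\sum_{k\neq k^*}(\mu^*-\mu_k)\mathbb{E}[N_k(T)]$, the observation that a pull of a suboptimal arm $k$ forces the index comparison $\sqrt{N_k}\,|\hat{\mu}_k-\mu^*|\leq \sqrt{N_{k^*}}\,|\hat{\mu}_{k^*}-\mu^*|$, and a uniform clean event built from sub-Gaussian concentration are exactly the right ingredients.

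There are, however, two genuine gaps in how you assemble them. First, the step ``the optimal arm's empirical distance is near $0$, so the index of $k$ strictly exceeds that of $k^*$'' is wrong as stated: the APT index of the optimal arm is $\sqrt{N_{k^*}(t)}\,|\hat{\mu}_{k^*}-\mu^*|$, and the $\sqrt{N_{k^*}}$ weighting exactly cancels the $1/\sqrt{N_{k^*}}$ decay of the deviation, so on the clean event this index is only bounded by a quantity of order $R\sqrt{\delta\log T}$, which grows with $T$. The stopping condition for arm $k$ must therefore come from comparing $\sqrt{n}\left(\Delta_k-r_n\right)$ against $R\sqrt{\delta\log T}$, not from ``radius below $\Delta_k/2$ and opponent index $\approx 0$''; it is a coincidence of the correct argument that both readings land on the threshold $n\leq 4R^2\delta\log T/\Delta_k^2$ when $r_n=R\sqrt{\delta\log T/n}$. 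Second, and more seriously, your two calibration claims are mutually inconsistent under the standard Hoeffding bound for $R$-sub-Gaussian averages, $\mathbb{P}\left(|\hat{\mu}_{k,n}-\mu_k|\geq\epsilon\right)\leq 2\exp\left(-n\epsilon^2/(2R^2)\right)$: the radius $r_n=R\sqrt{\delta\log T/n}$ that delivers the $4R^2$ numerator has per-(arm,count) failure probability of order $T^{-\delta/2}$, whereas the $T^{-2\delta}$ failure rate you invoke to produce the $K/T^{2\delta-2}$ residual forces $r_n=2R\sqrt{\delta\log T/n}$ and hence a $16R^2$ numerator. As sketched, you get either the claimed leading constant together with a residual of order $KT^{2-\delta/2}$ (which is vacuous for $\delta$ near $1$, precisely where the infimum matters) or the claimed residual with a constant four times too large. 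Reconciling the two --- for instance via a pull-count-pair decomposition in the style of the classical UCB1 analysis, or via the specific deviation lemma used in \cite{pmlr-v48-locatelli16} --- is the actual mathematical content of the proof, and it is the part your proposal explicitly flags as an obstacle but leaves unresolved.
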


In what follows, we incorporate DGAI to learn a dynamic and adaptive threshold rather than making random assumptions about the threshold. The upper confidence bound used for sampling serves as an identification criterion; for instance, $| \hat{\mu}_{i, t} - \mu_i | \leq U$ implies arm $i$ is good $\mu_i \geq \xi$ when $\hat{\mu}_{i, T_i(t)} - U \geq \xi$.
We have implemented two distinct confidence bounds to aid in solving the cumulative reward maximization problem. The first confidence bound, drawn from Equation~\ref{eq:offline_sample_objective}, directs the sampling strategy to select arms based on criteria that softly eliminate sub-optimal options.
In contrast, the other confidence bound, derived from Equation~\ref{eq:offline_identify_objective}, selects arms according to criteria that reject those falling below the threshold.
The objective function to solve the multi-arm bandit with threshold re-formulates both Equation~\ref{eq:offline_sample_objective} and Equation~\ref{eq:offline_identify_objective} into a single, unified objective function:
\begin{equation}
\label{eq:combine_objective}
\begin{split}
    \max_{\alpha,\beta}\sum_{t=1}^T\mathbb{E}[y_t]
    =\max_{\alpha,\beta}\sum_{t=1}^T \sum_{i=1}^K p_{i,t}\mu_i~,\\
    p_{i, t} = \frac{exp(S_{i,t}*I^{\xi}_{i,t})}{\sum exp(S_{j,t} * I^{\xi}_{i,t})}~, ~~~~~~~~~~ \\
    I_{i, t} = \sigma((\mu_{i,t} - \alpha \|x_i\|_{V_t^{-1}} - \xi) * M)
\end{split}
\end{equation}


\section{Experiment}
\label{sec:exp}

Here we would like to evaluate DGAI in (1) improvements in GAI problem (2) improvements in cumulative reward maximization with given threshold.

\subsection{Experiment Setup}

\subsubsection{Datasets}
We use two synthetic and two real-world datasets whose details are listed in Appendix Table~\ref{tab:dataset}.
We generated one small and one large synthetic dataset with a number of arms = 50 and 1000, respectively. The reward of each arm is generated from the uniform distribution $\in [0.5005, 0.49975]$, and the threshold $\xi$ is set to 0.5.
For real world datasets,  we used two public recommendation datasets where each item can be treated as an item: OpenBandit~\cite{saito2020large} and MovieLens~\cite{harper2015movielens}. We use the method similar to \cite{da2022fast,tsai2024lil}, to transform multi--class dataset to bandit dataset
 In the MovieLens dataset, the rating of each movie is treated as a means to sample the stochastic reward of each arm. 
The threshold for a good arm is set to the 95 percentile reward of all arms in the entire dataset.

\begin{table}
    \centering
    \tabcolsep=0.15cm
    \begin{tabular}{lcccccccc}
    \toprule
        ~& synth & synth & MovieLens & Openbandit\\
        ~& small & large & ~ & ~\\
    \midrule
        HDoC & 1.2e6 & 6.5e4 & 1.8e6 & 4e5\\
        LUCB-G & 1.4e6 & 1.1e5 & 1.7e6 & 4e5\\
        APT-G & 1.8e6 & 8.8e4 & 6e6 & 4e5\\
        TT-TS & 1.7e6 & 1.1e5 & 4.1e7 & 4e5\\
        SoftUCB-G & 2.7e6 & 1.2e5 & 3.8e7 & 1e6\\
        DGAI-online & \textbf{3.1e6} & \textbf{1.7e5} & \textbf{4.2e7} & \textbf{1.3e6}\\
        DGAI-offline & \textbf{4.1e6} & \textbf{2.5e5} & \textbf{6.1e7} & \textbf{1.5e6}\\
    \bottomrule
    \end{tabular}
    \caption{\textit{Exploit score} of DGAI and the baselines. The result shows that DGAI outperforms the baselines in all cases. Results exhibit a standard deviation below 10\%, averaged over 10 repetitions per run.}
    \label{tab:exp1}
\end{table}


\subsubsection{Parameter setting}
For the two optimized parameters $\alpha, \beta$, we set initial values to 0.
We used stochastic gradient descent as the optimizer and 1e-1 as the learning rate.
$\eta_1$ and $eta_2$ in Equation~\ref{eq:offline_sample_objective_constraint} and Equation~\ref{eq:offline_identify_objective_constraint} are both set to 1e-3.
It is worth mentioning that fine-tuning their parameters can improve the convergence of the training, and a bad hyperparameter may not converge to optimal in one single training trajectory.

\subsection{Good arm identification}
\subsubsection{Evaluation Metric}
The metric for evaluating the improvement of good arm identification problem is by the \textit{Exploit score} $\sum_{i=1}^K (T-t_i) * (\mu_i - \xi)$ which is also used in Equation~\ref{eq:exploit_score}.
The increase in the cumulative score from this metric indicates that the agent outputs good arms faster.
It not only evaluates the overall performance (when the agent finishes identifying all arms) but also considers each individual arm (when each arm is identified).

\subsubsection{Baseline}
We compare with three SOTA algorithms: HDoC, APT-G, LUCB-G introduced in~\cite{kano2019good} for solving the good arm identification problem.
We also compare with top-two Thompson sampling (TT-TS)~\cite{russo2016simple}, which is a Bayesian method designed for arm identification.
We also added SoftUCB-G, which runs the exact HDoC algorithm, except that we replaced the conventional upper confidence bound with SoftUCB~\cite{yang2020differentiable} such that it also learns the upper bound with differential data-driven methods.
The results of all baselines are averaged with ten different random seeds.

\subsubsection{Sampling Bound}
\begin{enumerate}
    \item HDoc : Pull arm $\hat{a}^* = \argmax_{i\in A} \overline{\mu}_{i,t}$~, where\\
    $\overline{\mu}_{i,t} = \hat{\mu}_{i,t} + \sqrt{\frac{\log{t}}{2N_i(t)}}$
    \item LUCB-G : Pull arm $\hat{a}^* = \argmax_{i\in A} \overline{\mu}_{i,t}$~, where\\
    $\overline{\mu}_{i,t} = \hat{\mu}_{i,t} + \sqrt{\frac{\log{4KN_i^2{t}/\delta}}{2N_i(t)}}$
    \item APT-G : Pull arm $\hat{a}^* = \argmax_{i\in A} \beta_{i,t}$~, where\\
    $\beta_{i,t} = \hat{\mu}_{i,t} + \sqrt{N_i(t)}|\xi - \hat{\mu}_{i,t}|$
    \item TT-TS : Pull arm $\hat{a}^* = \argmax_{a\not\in I_m} P_{i,t}$, where $I = \argmax_{a\in A} P_{i,t}$ and $P$ is the posterior probability of arm $i$ is optimal.
    \item SoftUCB-G : Pull arm $\hat{a}^* = \argmax_{i\in A} P_{i,t}$
    \item DGAI : Pull arm $\hat{a}^* = \argmax_{i\in A} P_{i,t}$
\end{enumerate}

\subsubsection{Identification Bound}
All baseline have identical identification bound:
$\sqrt{\frac{\log{4KN_i^2(t)/\delta}}{2N_i(t)}}$ and DGAI is $\alpha||\mathbf{x}_i||_{\mathbf{V}^{-1}_t}$.

\subsection{Cumulative Reward Maximization}

\subsubsection{Evaluation Metric}
We use the cumulative reward (same as the classic multi-arm bandit problem) to evaluate the results. \textbf{Baseline}
We compare our methods to classical UCB, Thompson sampling and SoftUCB.

\subsection{Results}
The experiment results for DGAI are shown in Table.~\ref{tab:exp1}.
The complete performance through the training epochs for GAI is shown in Fig.~\ref{fig:exp1}.
The cumulative exploit score for DGAI outperforms other baselines eventually for both online and offline learning settings.
Fig.~\ref{fig:learning-curve} depicts the learning curves of $\alpha$ and $\beta$.
It shows that the confidence bound in both online and offline settings converges in a similar trend, but the parameters converge must faster in the offline setting since it updates the parameters with multiple complete training trajectories.
We can observe that the confidence bound w.r.t $\alpha,\beta$ continually adjusts during the single round training trajectory and, at last, converge to a scale similar to in the offline setting.
In Fig.~\ref{fig:bound-compare}, we plot the confidence bound for the best arm in the arm set $\mathcal{A}$ during the training trajectory and compare the identification bounds between DGAI and the baselines.
We can observe that as the training epochs go on, the learned confidence bound w.r.t $\alpha,\beta$ continually converges to optimal, and thus the cumulative exploit score continues to increase.
The results suggested that the learned confidence bound with DGAI is significantly tighter than the original union confidence bound consistently in all cases.
The experiment results solving the cumulative reward maximization problem with threshold are shown in Fig.~\ref{fig:exp2}.
We can observe that DGAI outperforms other baselines on all datasets with the help of the additional identification criteria based on the threshold.
Note that the curves in all figures are smoothed to ignore the vibration.

\section{Conclusion}

In this research, we have presented a novel differentiable algorithm for the good arm identification (GAI) problem within the framework of a structured bandit setting.
Our algorithm, DGAI, has the unique capability to adapt the confidence bound, learning through gradient ascent, which sets it apart from existing methods.
We showed that DGAI can be applied in both online/offline settings and could be further extended to non-linear settings.
Furthermore, we also show that DGAI improves cumulative reward maximization problems when a threshold is provided as prior knowledge.
The experiments on all datasets have shown order-of-magnitude improvements compared to other baselines in terms of rewards obtained.
A more comprehensive experiment for the generalized linear case and theoretical analysis for sample complexity is reserved for future studies.

\clearpage
\bibliography{aaai22}
\clearpage
\appendix

\newcommand{\nn}{\nonumber\\}
\newcommand{\since}[1]{\quad\mbox{#1}}
\newcommand{\omu}{\overline{\mu}}
\newcommand{\umu}{\underline{\mu}}

\section{Proof of Theorem~\ref{thm:PAC}}
\label{proof:PAC}

  \begin{proof}[Proof of Theorem \ref{thm:PAC}] 
    We show that DGAI is $(\lambda,\delta)$-PAC for arbitrary $\lambda\in[K]$.
    
    First we consider the case that there are more than or equal to
    $\lambda$ good arms and show
    \begin{align}
        \mathbb{P}\left[
        \{\hat{m}< \lambda\}\,\cup\,\bigcup_{i\in  \{\hat{a}_1, \hat{a}_2, \ldots, \hat{a}_\lambda \}  }\{\mu_i< \xi\}
        \right]\le \delta
        \label{pac_toprove1}.
    \end{align}
    
    Since we are now considering the case $m\ge \lambda$,
    the event $\{\hat{m}< \lambda\}$ implies that
    at least one good arm $j\in[m]$ is regarded as a bad arm, that is,
    $\{\umu_{j,n} \le \xi\}$ occurs for some $j\in [m]$ and $n\in\mathbb{N}$.
    Thus we have
    \begin{align}
    \mathbb{P}[\hat{m}< \lambda]
    &\le
    \sum_{j\in [m]}
        \mathbb{P}\left[\bigcup_{n \in \mathbb{N}}\{\omu_{j,n} < \xi\} \right]
    \nn
    &\le
    \sum_{j\in [m]}
    p_{i, t}
    \nn
    &\le
    \delta\since{by Lemma \ref{lemma:gamma_lemma}}\label{eq_error_pac}
    \end{align}
    On the other hand, since the event $\bigcup_{i\in  \{\hat{a}_1, \hat{a}_2, \ldots, \hat{a}_\lambda \}  }\{\mu_i< \xi\}$
    implies that $j \in \{\hat{a}_i\}_{i=1}^{\lambda}$ for some
    bad arm $j\in [K]\setminus [m]$,
    we have
    \begin{align}
    \mathbb{P}\left[\bigcup_{i\in  \{\hat{a}_1, \hat{a}_2, \ldots, \hat{a}_\lambda \}  }\{\mu_i< \xi\}\right]
    &\le
    \sum_{j\in [K]\setminus [m]}\mathbb{P}[j \in \{\hat{a}_i\}_{i=1}^{\lambda}]\nn
    &\le
    \sum_{j\in [K]\setminus [m]}
        \mathbb{P}\left[\bigcup_{n \in \mathbb{N}}\{\umu_{j,n} \ge \xi\} \right]
    \nn
    &\le
    \delta\label{eq_error_pac2}
    \end{align}
    in the same way as \eqref{eq_error_pac}.
    We obtain \eqref{pac_toprove1} by putting \eqref{eq_error_pac} and \eqref{eq_error_pac2} together.

    Next we consider the case that
    the number of good arms $m$ is less than $\lambda$
    and show
    \begin{align}
    \mathbb{P}[\hat{m}\ge \lambda]\le \delta\,.\label{pac_toprove2}
    \end{align}
    Since there are at most $m<\lambda$ good arms,
    the event $\{\hat{m}\ge \lambda\}$ implies that
    $j \in \{\hat{a}_i\}_{i=1}^{\lambda}$ for some $j\in [K]\setminus[\lambda]$.
    Thus, in the same way as \eqref{eq_error_pac2} we have
    \begin{align}
    \mathbb{P}\left[\hat{m}\ge \lambda\right]
    &\le
    \sum_{j\in [K]\setminus [m]}\mathbb{P}[j \in \{\hat{a}_i\}_{i=1}^{\lambda}]\nn
    &\le
    \delta\,,\nonumber
    \end{align}
    which proves \eqref{pac_toprove2}.
\end{proof}


\section{Extension to non-linear DGAI}
\label{sec:nonlinear}
In this section, we will show how DGAI can be easily extended to non-linear cases, specifically, kernelised and neural network versions. Note that further experiments and theoretical analysis will be left for future work.

\begin{remark}
\label{rmk:kernel}

We can easily extend DGAI in linear form to non-linear kernelised version.
In the following, we applied the kernel trick~\cite{cristianini2004kernel} and the kernelised version of the Mahalanobis~\cite{haasdonk2010classification} similar to the settings in KernelUCB~\cite{valko2013finite}.
The derivation is straightforward and we provide it for convenience and to introduce the notation.
Kernel methods assume that there exists a mapping $\phi : \mathbb{R}^d \rightarrow H$ that maps the data to a (possibly infinite dimensional) Hilbert space in which a linear relationship can be observed.
We call $\mathbb{R}^d$ the primal space and $H$ the associated reproducing kernel Hilbert space (RKHS).
We use matrix notation to denote the inner product of two elements $h, h' \in H$, i.e. $h^Th' := \langle h, h' \rangle_H$ and $\|h\| = \sqrt{\langle h, h \rangle_H}$ to denote the RKHS norm.
From the mapping $\phi$ we have the kernel function, defined by: $k(x, x') := \phi(x)^T\phi(x')$, $\forall x, x' \in \mathbb{R}^d$, and the kernel matrix of a data set $\{x_1, . . . , x_t\} \subset \mathbb{R}^d$ given by $K_t := \{k(x_i, x_j)\}_{i,j\leq t}$.
Following the above we obtain:

\begin{equation}
\hat{\mu}_{i,t} = k_{x,t}^T(K_t + \gamma I)^{-1}y_t.
\end{equation}

\begin{equation}
    \hat{U}_{i,t} := \gamma^{-1/2}\sqrt{k(x_{i,t}, x_{i,t}) - k_{x,t}^T(K_t + \gamma I)^{-1}k_{x,t}}.
\end{equation}

where $k_{x,t} := \Phi_t\phi(x) = [k(x, x_1), . . . , k(x, x_{t-1})]^T$ and for some regularization parameters $\gamma > 0$. Plugging this into equation~\ref{eq:diff-comb} and~\ref{eq:S}, we can construct the kernelised version of differentiable UCB index.
\end{remark}

\begin{remark}
\label{rmk:neural}
We can easily extend DGAI in linear form to non-linear neural network version.
In the following, we use a neural network $f(x; \theta)$ to predict the reward of context, and upper confidence bounds computed from the network to guide exploration similar to NeuralUCB~\cite{zhou2020neural}.
A simplified version of NeuralUCB where only the first-order Taylor approximation of the neural network around the initialized parameter is updated through online ridge regression  can be seen as KernelUCB specialized to the Neural Tangent Kernel~\cite{jacot2018neural}, or LinUCB~\cite{li2010contextual} with Neural Tangent Random Features~\cite{cao2019generalization}.
Following the above we obtain:
Denote the gradient of the neural network function by $g(x; \theta) = \nabla_{\theta} f(x; \theta) \in \mathbb{R}^p$, $Z = \gamma \textbf{I}$ and network width $M$. We obtain:
\begin{equation}
    \mu_i = f(x_{i,t}; \theta_{t-1})
\end{equation}
\begin{equation}
    \hat{U}_{i,t} = \beta_{t-1}\sqrt{g(x_{t,a}; \theta_{t-1})^\top Z_{t-1}^{-1} g(x_{t,a}; \theta_{t-1})/M}
\end{equation}

where $g(x; \theta) = \nabla_{\theta} f(x; \theta) \in \mathbb{R}^p$ is the gradient of the neural network, $Z = \gamma \textbf{I}$ and $M$ is the network width.
Plugging this into equation~\ref{eq:diff-comb} and~\ref{eq:S}, we can construct the neural network version of differentiable UCB index.

\end{remark}




\begin{table}[h!]
    \centering
    \begin{tabular}{lcccccccc}
    \toprule
        ~& $N$ arms & $T$ trajectory & $\xi$ threshold\\
    \midrule
        synth small & 50 & 1e6 & 0.5\\
        synth large & 1000 & 1e6 & 0.5\\
        MovieLens & 9527 & 1e5 & 0.071\\
        Openbandit & 80 & 107 & 0.005\\
    \bottomrule
    \end{tabular}
    \caption{Details of the datasets.}
    \label{tab:dataset}
\end{table}

    

\end{document}


\maketitle

\appendix

\section{Proof of Theorem~\ref{thm:PAC}}
\label{proof:PAC}

  \begin{proof}[Proof of Theorem \ref{thm:PAC}] 
    We show that DGAI is $(\lambda,\delta)$-PAC for arbitrary $\lambda\in[K]$.
    
    First we consider the case that there are more than or equal to
    $\lambda$ good arms and show
    \begin{align}
        \mathbb{P}\left[
        \{\hat{m}< \lambda\}\,\cup\,\bigcup_{i\in  \{\hat{a}_1, \hat{a}_2, \ldots, \hat{a}_\lambda \}  }\{\mu_i< \xi\}
        \right]\le \delta
        \label{pac_toprove1}.
    \end{align}
    
    Since we are now considering the case $m\ge \lambda$,
    the event $\{\hat{m}< \lambda\}$ implies that
    at least one good arm $j\in[m]$ is regarded as a bad arm, that is,
    $\{\umu_{j,n} \le \xi\}$ occurs for some $j\in [m]$ and $n\in\mathbb{N}$.
    Thus we have
    \begin{align}
    \mathbb{P}[\hat{m}< \lambda]
    &\le
    \sum_{j\in [m]}
        \mathbb{P}\left[\bigcup_{n \in \mathbb{N}}\{\omu_{j,n} < \xi\} \right]
    \nn
    &\le
    \sum_{j\in [m]}
    p_{i, t}
    \nn
    &\le
    \delta\since{by Lemma \ref{lemma:gamma_lemma}}\label{eq_error_pac}
    \end{align}
    On the other hand, since the event $\bigcup_{i\in  \{\hat{a}_1, \hat{a}_2, \ldots, \hat{a}_\lambda \}  }\{\mu_i< \xi\}$
    implies that $j \in \{\hat{a}_i\}_{i=1}^{\lambda}$ for some
    bad arm $j\in [K]\setminus [m]$,
    we have
    \begin{align}
    \mathbb{P}\left[\bigcup_{i\in  \{\hat{a}_1, \hat{a}_2, \ldots, \hat{a}_\lambda \}  }\{\mu_i< \xi\}\right]
    &\le
    \sum_{j\in [K]\setminus [m]}\mathbb{P}[j \in \{\hat{a}_i\}_{i=1}^{\lambda}]\nn
    &\le
    \sum_{j\in [K]\setminus [m]}
        \mathbb{P}\left[\bigcup_{n \in \mathbb{N}}\{\umu_{j,n} \ge \xi\} \right]
    \nn
    &\le
    \delta\label{eq_error_pac2}
    \end{align}
    in the same way as \eqref{eq_error_pac}.
    We obtain \eqref{pac_toprove1} by putting \eqref{eq_error_pac} and \eqref{eq_error_pac2} together.

    Next we consider the case that
    the number of good arms $m$ is less than $\lambda$
    and show
    \begin{align}
    \mathbb{P}[\hat{m}\ge \lambda]\le \delta\,.\label{pac_toprove2}
    \end{align}
    Since there are at most $m<\lambda$ good arms,
    the event $\{\hat{m}\ge \lambda\}$ implies that
    $j \in \{\hat{a}_i\}_{i=1}^{\lambda}$ for some $j\in [K]\setminus[\lambda]$.
    Thus, in the same way as \eqref{eq_error_pac2} we have
    \begin{align}
    \mathbb{P}\left[\hat{m}\ge \lambda\right]
    &\le
    \sum_{j\in [K]\setminus [m]}\mathbb{P}[j \in \{\hat{a}_i\}_{i=1}^{\lambda}]\nn
    &\le
    \delta\,,\nonumber
    \end{align}
    which proves \eqref{pac_toprove2}.
\end{proof}


\section{Extension to non-linear DGAI}
\label{sec:nonlinear}
In this section, we will show how DGAI can be easily extended to non-linear cases, specifically, kernelised and neural network versions. Note that further experiments and theoretical analysis will be left for future work.

\begin{remark}
\label{rmk:kernel}

We can easily extend DGAI in linear form to non-linear kernelised version.
In the following, we applied the kernel trick~\cite{cristianini2004kernel} and the kernelised version of the Mahalanobis~\cite{haasdonk2010classification} similar to the settings in KernelUCB~\cite{valko2013finite}.
The derivation is straightforward and we provide it for convenience and to introduce the notation.
Kernel methods assume that there exists a mapping $\phi : \mathbb{R}^d \rightarrow H$ that maps the data to a (possibly infinite dimensional) Hilbert space in which a linear relationship can be observed.
We call $\mathbb{R}^d$ the primal space and $H$ the associated reproducing kernel Hilbert space (RKHS).
We use matrix notation to denote the inner product of two elements $h, h' \in H$, i.e. $h^Th' := \langle h, h' \rangle_H$ and $\|h\| = \sqrt{\langle h, h \rangle_H}$ to denote the RKHS norm.
From the mapping $\phi$ we have the kernel function, defined by: $k(x, x') := \phi(x)^T\phi(x')$, $\forall x, x' \in \mathbb{R}^d$, and the kernel matrix of a data set $\{x_1, . . . , x_t\} \subset \mathbb{R}^d$ given by $K_t := \{k(x_i, x_j)\}_{i,j\leq t}$.
Following the above we obtain:

\begin{equation}
\hat{\mu}_{i,t} = k_{x,t}^T(K_t + \gamma I)^{-1}y_t.
\end{equation}

\begin{equation}
    \hat{U}_{i,t} := \gamma^{-1/2}\sqrt{k(x_{i,t}, x_{i,t}) - k_{x,t}^T(K_t + \gamma I)^{-1}k_{x,t}}.
\end{equation}

where $k_{x,t} := \Phi_t\phi(x) = [k(x, x_1), . . . , k(x, x_{t-1})]^T$ and for some regularization parameters $\gamma > 0$. Plugging this into equation~\ref{eq:diff-comb} and~\ref{eq:S}, we can construct the kernelised version of differentiable UCB index.
\end{remark}

\begin{remark}
\label{rmk:neural}
We can easily extend DGAI in linear form to non-linear neural network version.
In the following, we use a neural network $f(x; \theta)$ to predict the reward of context, and upper confidence bounds computed from the network to guide exploration similar to NeuralUCB~\cite{zhou2020neural}.
A simplified version of NeuralUCB where only the first-order Taylor approximation of the neural network around the initialized parameter is updated through online ridge regression  can be seen as KernelUCB specialized to the Neural Tangent Kernel~\cite{jacot2018neural}, or LinUCB~\cite{li2010contextual} with Neural Tangent Random Features~\cite{cao2019generalization}.
Following the above we obtain:
Denote the gradient of the neural network function by $g(x; \theta) = \nabla_{\theta} f(x; \theta) \in \mathbb{R}^p$, $Z = \gamma \textbf{I}$ and network width $M$. We obtain:
\begin{equation}
    \mu_i = f(x_{i,t}; \theta_{t-1})
\end{equation}
\begin{equation}
    \hat{U}_{i,t} = \beta_{t-1}\sqrt{g(x_{t,a}; \theta_{t-1})^\top Z_{t-1}^{-1} g(x_{t,a}; \theta_{t-1})/M}
\end{equation}

where $g(x; \theta) = \nabla_{\theta} f(x; \theta) \in \mathbb{R}^p$ is the gradient of the neural network, $Z = \gamma \textbf{I}$ and $M$ is the network width.
Plugging this into equation~\ref{eq:diff-comb} and~\ref{eq:S}, we can construct the neural network version of differentiable UCB index.

\end{remark}

\section{More Tables and Figures}

Fig.~\ref{fig:criteria-diff} shows the detail of optimizing cumulative reward maximization with prior threshold. We plot two independent criteria, eliminating sub-optimal arm and eliminating arms below threshold, during one training trajectory and the combine usage clearly provides a better sampling strategy.

Table~\ref{tab:dataset} gives the detail stats of the datasets we used in all experiments.

\begin{table}[h!]
    \centering
    \begin{tabular}{lcccccccc}
    \toprule
        ~& $N$ arms & $T$ trajectory & $\xi$ threshold\\
    \midrule
        synth small & 50 & 1e6 & 0.5\\
        synth large & 1000 & 1e6 & 0.5\\
        MovieLens & 9527 & 1e5 & 0.071\\
        Openbandit & 80 & 107 & 0.005\\
    \bottomrule
    \end{tabular}
    \caption{Details of the datasets.}
    \label{tab:dataset}
\end{table}

\begin{figure}[!tbh]
    \centering
    \begin{subfigure}[b]{0.4\textwidth}
        \centering
        \includegraphics[width=\textwidth]{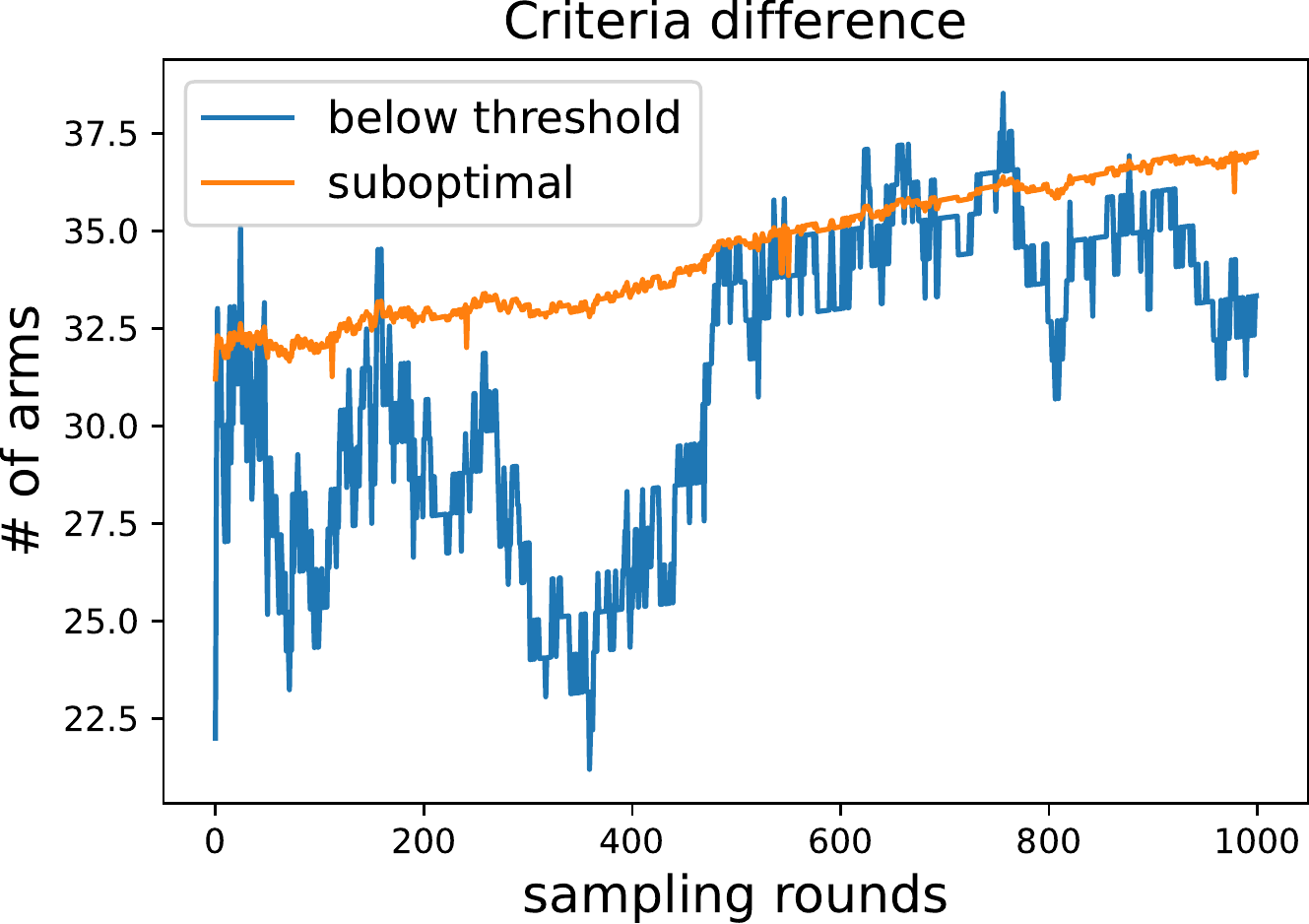}
        \caption{Synth small offline.}
    \end{subfigure}
    \hfill
    \begin{subfigure}[b]{0.4\textwidth}
        \centering
        \includegraphics[width=\textwidth]{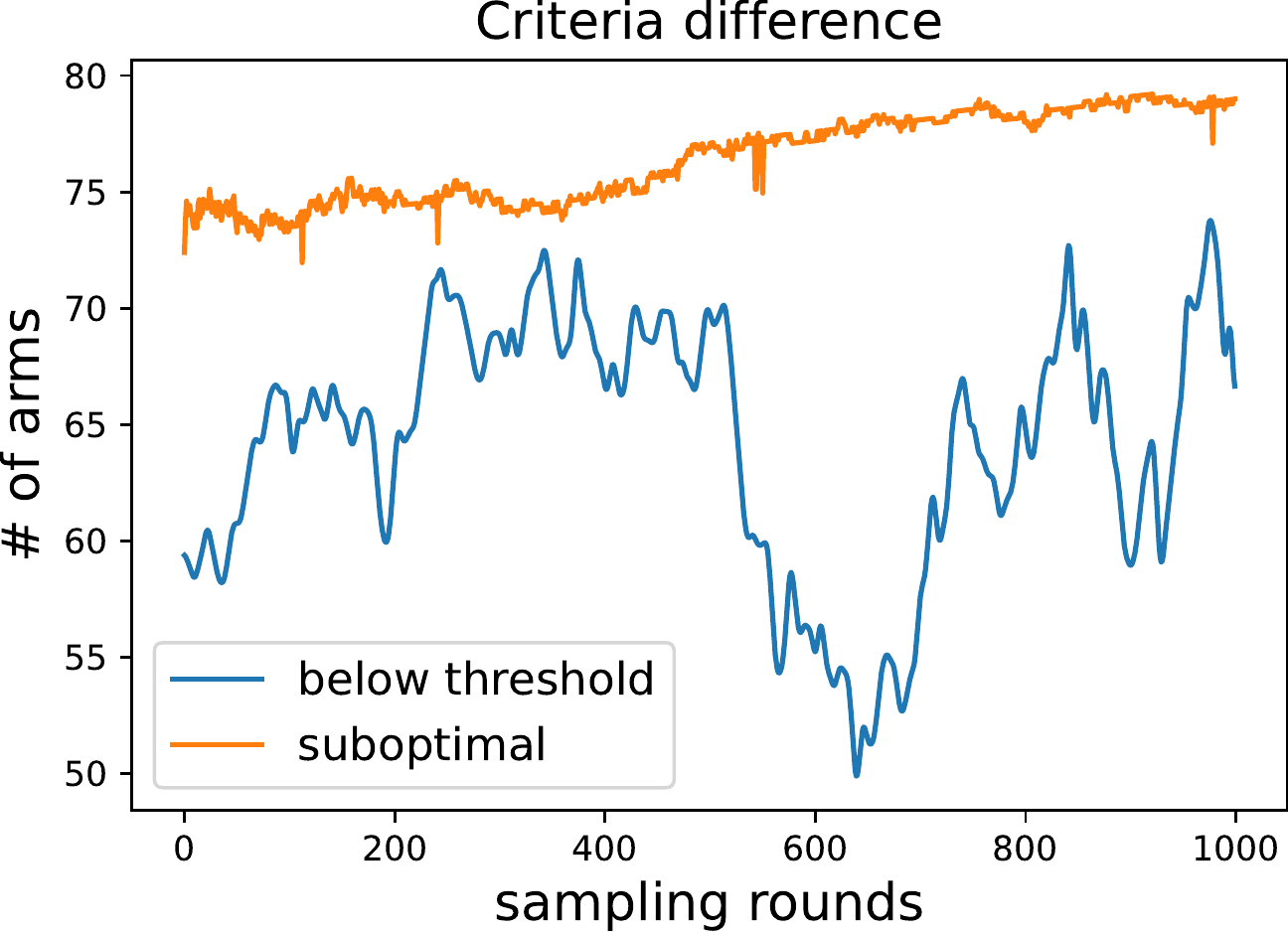}
        \caption{Synth small online.}
    \end{subfigure}
    \hfill
    \begin{subfigure}[b]{0.4\textwidth}
        \centering
        \includegraphics[width=\textwidth]{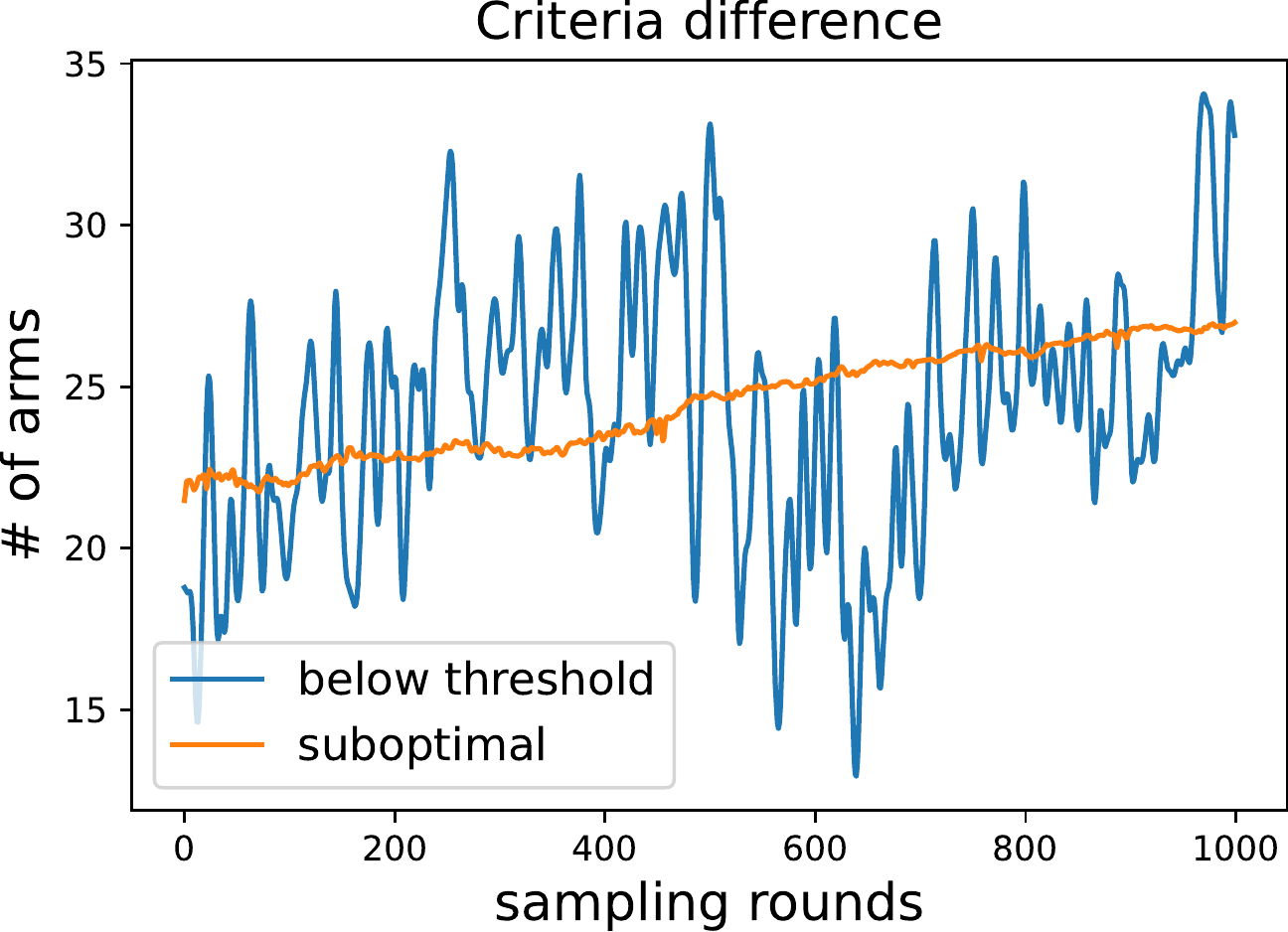}
        \caption{Synth small offline.}
    \end{subfigure}
    \hfill
    \begin{subfigure}[b]{0.4\textwidth}
        \centering
        \includegraphics[width=\textwidth]{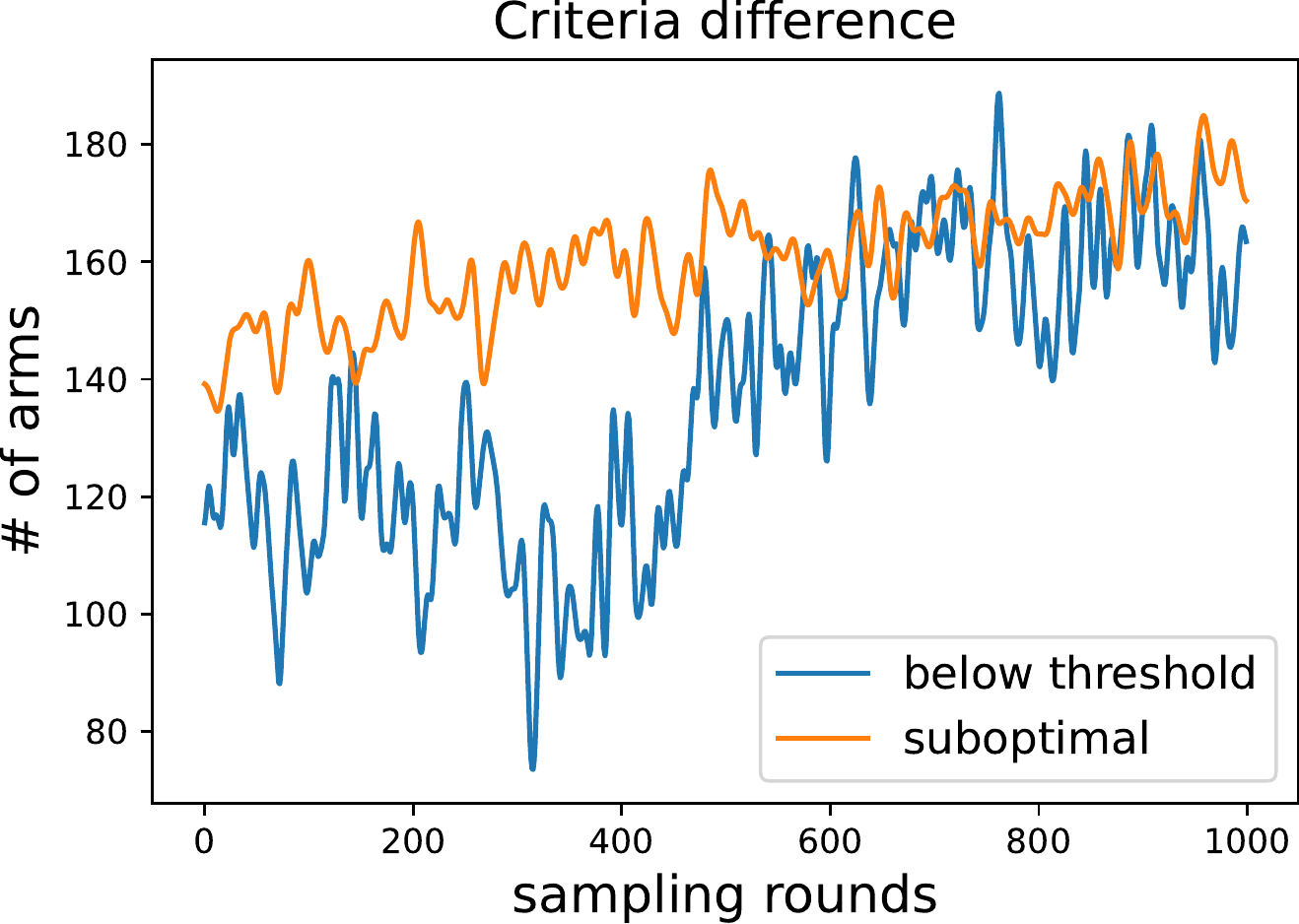}
        \caption{Synth small online.}
    \end{subfigure}
    
    \caption{}
    \label{fig:criteria-diff}
\end{figure}

\clearpage
\bibliography{LaTeX/aaai22}